\icmltitlerunning{Learning the Reward Function for a Misspecified Model}
\newcommand{\citenoun}[1]{\citet{#1}}
\DeclareMathOperator*{\expect}{{\mathlarger {\mathbf E}}}
\newcommand{\expects}{\expect\nolimits}
\newtheorem{lemma}{Lemma}
\newtheorem{theorem}[lemma]{Theorem}
\begin{document}

\twocolumn[
\icmltitle{Learning the Reward Function for a Misspecified Model}




\begin{icmlauthorlist}
\icmlauthor{Erik Talvitie}{fandm}
\end{icmlauthorlist}

\icmlaffiliation{fandm}{Department of Computer Science, Franklin \&
  Marshall College, Lancaster, Pennsylvania, USA}

\icmlcorrespondingauthor{Erik Talvitie}{erik.talvitie@fandm.edu}

\icmlkeywords{model-based reinforcement learning; Markov decision processes}

\vskip 0.3in
]



\printAffiliationsAndNotice{}  

\begin{abstract}
  In model-based reinforcement learning it is typical to decouple the
  problems of learning the dynamics model and learning the reward
  function. However, when the dynamics model is flawed, it may
  generate erroneous states that would never occur in the true
  environment. It is not clear {\em a priori} what value the reward
  function should assign to such states. This paper presents a novel
  error bound that accounts for the reward model's behavior in states
  sampled from the model. This bound is used to extend the existing
  Hallucinated DAgger-MC algorithm, which offers theoretical
  performance guarantees in deterministic MDPs that do not assume a
  perfect model can be learned. Empirically, this approach to reward
  learning can yield dramatic improvements in control performance when
  the dynamics model is flawed.
\end{abstract}

\section{Introduction} \label{sec:intro}

In the reinforcement learning problem, an agent interacts with an
environment, receiving rewards along the way that indicate the quality
of its decisions. The agent's task is to learn to behave in a way that
maximizes reward. Model-based reinforcement learning (MBRL) techniques
approach this problem by learning a predictive model of the
environment and applying a planning algorithm to the model to make
decisions. Intuitively and theoretically \cite{szita2010model}, there
are many advantages to learning a model of the environment, but MBRL is
challenging in practice, since even seemingly minor flaws in the model
or the planner can result in catastrophic failure. As a result,
model-based methods have generally not been successful in large-scale
problems, with only a few notable exceptions
\citep[e.g.][]{abbeel2007application}.

This paper addresses an important but understudied problem in MBRL:
learning a reward function. It is common for work in model learning to
ignore the reward function \citep[e.g.][]{bellemare2014skip,
  oh2015action, chiappa2017recurrent} or, if the model will be used
for planning, to assume the reward function is given
\citep[e.g.][]{ross2012agnostic, talvitie2017self,
  ebert2017self}. Indeed, it is true that if an accurate model of the
environment's dynamics can be learned, reward learning is relatively
straightforward -- the two problems can be productively
decoupled. However, in this paper we will see that when the model
class is {\em misspecified} (i.e. that the representation does not
admit a perfectly accurate model), as is inevitable in problems of
genuine interest, the two learning problems are inherently entangled.

\subsection{An Example}\label{sec:example}

To better understand how the limitations of the dynamics model impact
reward learning, consider Shooter, a simplified video game example
introduced by \citet{talvitie2015agnostic}, pictured in Figure \ref{fig:shooter}. At
the bottom of the screen is a spaceship which can move left and right
and fire bullets, which fly upward. When the ship fires a bullet the
agent receives -1 reward. Near the top of the screen are three
targets. When a bullet hits a target in the middle (bullseye), the
target explodes and the agent receives 20 reward; otherwise a hit is
worth 10 reward. Figure \ref{fig:shooter} shows the explosions that
indicate how much reward the agent receives.

\begin{figure}
  \centering
  \includegraphics[width=.9\linewidth]{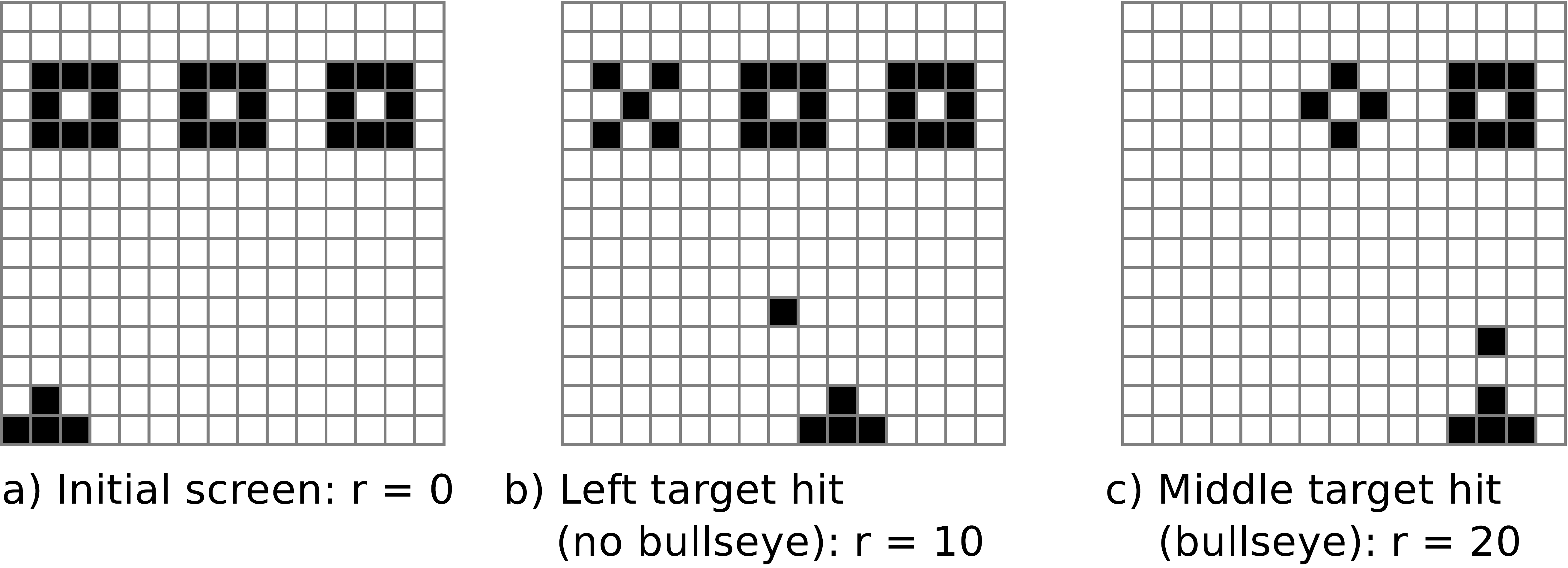}
  \caption{The Shooter domain.}
  \label{fig:shooter}
\end{figure}

It is typical to decompose the model learning problem into two
objectives: dynamics learning and reward learning. In the former the
agent must learn to map an input state and action to the next
state. In the latter the agent must learn to map a state and action to
reward. In this example the agent might learn to associate the
presence of explosions with reward. However, this decomposed approach
can fail when the dynamics model is imperfect.

For instance, say the dynamics model in this case is a factored MDP,
which predicts the value of each pixel in the next image based on the
$7 \times 5$ neighborhood centered on the pixel. Figure
\ref{fig:shooterRollout}b shows a short {\em sample rollout} from such
a model, sampling each state based on the previous sampled state. The
second image in the rollout illustrates the model's flaw: when
predicting the pixel marked with a question mark the model cannot
account for the presence of the bullet under the target. Hence, errors
appear in the subsequent image (marked with red outlines).

What reward should be associated with this erroneous image? The value
the learned model assigns will have a dramatic impact on the extent to
which the model is useful for planning and yet it is clear that no
amount of traditional data associating environment states with rewards
can answer this question. Even a provided, ``perfect'' reward function
would not answer this question; a reward function could assign any
value to this state and still be perfectly accurate in states that are
reachable in the environment. Intuitively it seems that the best case
for planning would be to predict 20 reward for the flawed state,
preserving the semantics that a target has been hit in the
bullseye. Note, however that this interpretation of the image is
specific to this particular flawed model; the reward model's quality
depends on its behavior in states generated by the {\em model} rather than
the environment.

The remainder of this paper formalizes this intuition. Section
\ref{sec:rwderr} presents a novel error bound on value functions in
terms of reward error, taking into account the rewards in flawed states
generated by the model. In Section \ref{sec:implications} the practical
implications of this theoretical insight are discussed, leading to an
extension of the existing Hallucinated DAgger-MC algorithm, which
provides theoretical guarantees in deterministic MDPs, even when the
model class is misspecified. Section \ref{sec:experiments}
demonstrates empirically that the approach suggested by the
theoretical results can produce good planning performance with a
flawed model, while reward models learned in the typical manner (or
even ``perfect'' reward functions) can lead to catastrophic planning
failure.

\section{Background}

\begin{figure}
  \centering
  \includegraphics[width=.75\linewidth]{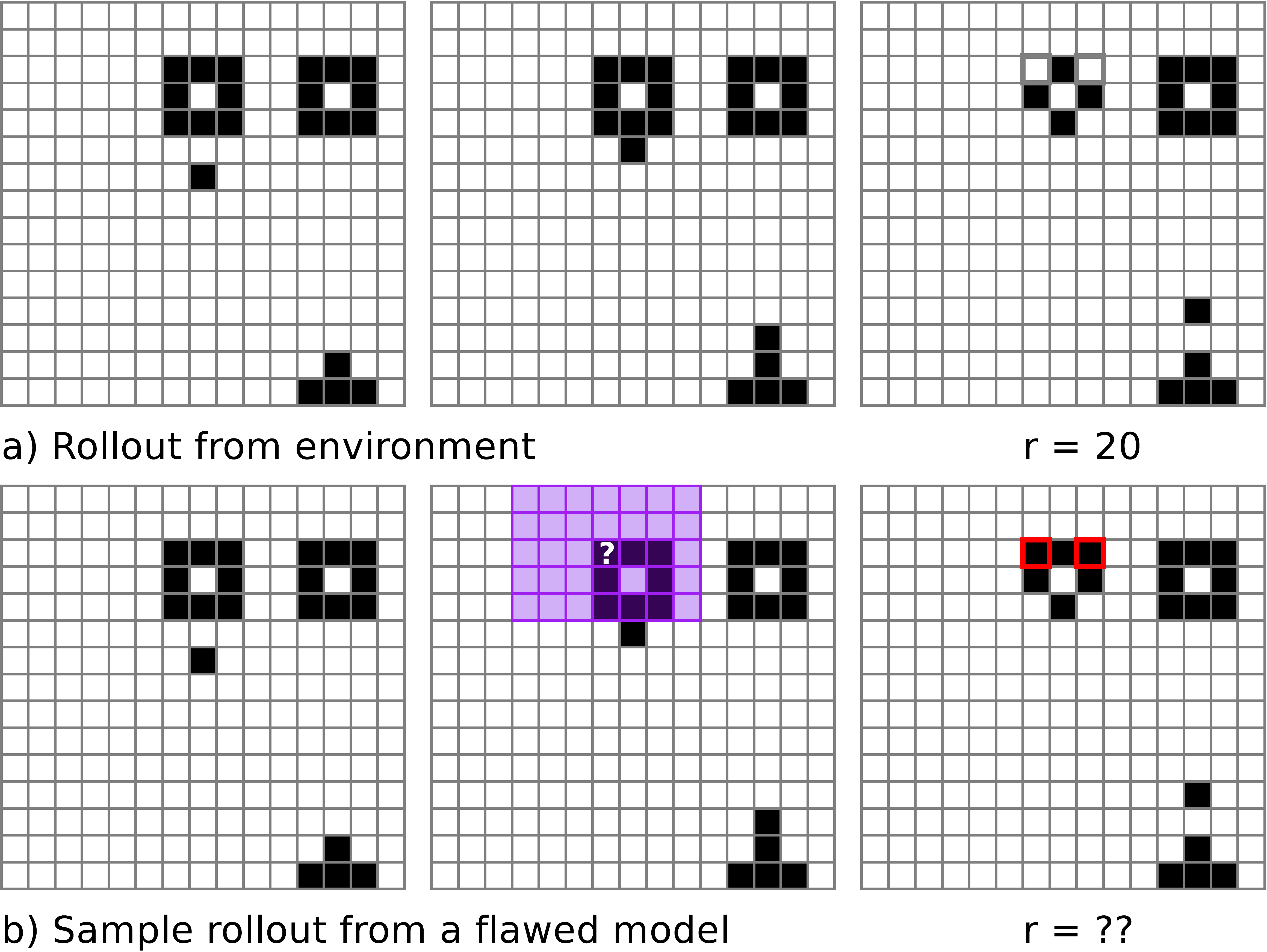}
  \caption{A flawed model may generate states for which the reward
    function is undefined.}
  \label{fig:shooterRollout}
\end{figure}

We focus on {\em Markov decision processes} (MDP). The environment's
initial state $s_1$ is drawn from a distribution
$\mu$. At each step $t$ the environment is in a state $s_t$. The agent
selects an action $a_t$ which causes the environment
to transition to a new state sampled from the transition distribution:
$s_{t+1} \sim P_{s_t}^{a_t}$.  The environment also emits a reward,
$R_{s_t}^{a_t}$. We assume that rewards are bounded within
$[0, M]$.

A {\em policy} $\pi$ specifies a way to behave in the MDP. Let
$\pi(a \mid s)$ be the probability that $\pi$ chooses action $a$ in
state $s$. For a sequence of actions $a_{1:t}$ let
$P(s' \mid s, a_{1:t}) = P_s^{a_{1:t}}(s')$ be the probability of
reaching $s'$ by starting in $s$ and taking the actions in the
sequence. For any state $s$, action $a$, and policy $\pi$, let
$D^t_{s, a, \pi}$ be the state-action distribution obtained after $t$
steps, starting with state $s$ and action $a$ and thereafter following
policy $\pi$. For a state action distribution $\xi$, let
$D^t_{\xi, \pi} = \expects_{(s, a) \sim \xi} D^t_{s, a, \pi}$.  We let
$\mathcal{S}$ be the set of states reachable in finite time by some
policy with non-zero probability. One may only observe the behavior
of $P$ and $R$ in states contained in $\mathcal{S}$.

The $T$-step {\em state-action value} of a policy, $Q^\pi_T(s, a)$
represents the expected discounted sum of rewards obtained by taking
action $a$ in state $s$ and executing $\pi$ for an additional $T-1$
steps:
$Q^\pi_T(s, a) = \sum_{t = 1}^{T}\gamma^{t-1} \expects_{(s', a') \sim
  D^t_{s, a, \pi}} R_{s'}^{a'}$.
Let the $T$-step {\em state value}
$V^\pi_T(s) = \expects_{a \sim \pi_s}[Q^\pi_T(s, a)]$. Let
$Q^\pi = Q^\pi_{\infty}$, and $V^\pi = V^\pi_{\infty}$. The agent's
goal will be to learn a policy $\pi$ that maximizes
$\expects_{s \sim \mu}[V^\pi(s)]$.

In MBRL we seek to learn a dynamics model $\hat{P}$, approximating
$P$, and a reward model $\hat{R}$, approximating $R$, and then to use
the combined model $(\hat{P}, \hat{R})$ to produce a policy via a
planning algorithm. We let $\hat{D}$, $\hat{Q}$, and $\hat{V}$
represent the corresponding quantities using the learned model. We
assume that $\hat{P}$ and $\hat{R}$ are defined over
$\hat{\mathcal{S}} \supseteq \mathcal{S}$; there may be states in
$\hat{\mathcal{S}}$ for which $P$ and $R$ are effectively undefined,
and it may not be known {\em a priori} which states these are.

Let $\mathcal{P}$ represent the {\em dynamics model class}, the set of
models the learning algorithm could possibly produce and
correspondingly let $\mathcal{R}$ be the {\em reward model class}. In
this work we are most interested in the common case that the dynamics
model is {\em misspecified}: there is no $\hat{P} \in \mathcal{P}$
that matches $P$ in every $s \in \mathcal{S}$. In this case
it is impossible to learn a perfectly accurate model; the agent must
make good decisions despite flaws in the learned model. The results in
this paper also permit the reward model to be similarly misspecified.

\subsection{Bounding Planning Performance}

For ease of analysis we focus our attention on the simple one-ply
Monte Carlo planning algorithm (one-ply MC), similar to the ``rollout
algorithm'' \citep{tesauro1996line}. For every state-action pair
$(s, a)$, the planner executes $N$ $T$-step sample rollouts using
$\hat{P}$, starting at $s$, taking action $a$, and then following a
{\em rollout policy} $\rho$. At each step of the rollout, $\hat{R}$
gives the reward. Let $\bar{Q}(s, a)$ be the average discounted return
of the rollouts starting with state $s$ and action $a$. For large $N$,
$\bar{Q}$ will closely approximate $\hat{Q}_T^{\rho}$
\citep{kakade2003sample}. The execution policy $\hat{\pi}$ will be
greedy with respect to $\bar{Q}$. We can place bounds on
the quality of $\hat{\pi}$.

For a policy $\pi$ and state-action distribution $\xi$, let
$\epsilon_{val}^{\xi, \pi, T}$ be the error in the $T$-step
state-action values the model assigns to the policy:
$\epsilon_{val}^{\xi, \pi, T} = \expect_{(s, a) \sim
  \xi}\big[|Q_T^{\pi}(s, a) - \hat{Q}_T^{\pi}(s, a)|\big]$. For a
state distribution $\mu$ and policy $\pi$ let $D_{\mu, \hat{\pi}}(s, a) =
\sum_{t=0}^{\infty} \gamma^{t} D_{\mu, \pi}^{t+1}(s, a)$.
The following is straightforwardly adapted from an
existing bound \cite{talvitie2015agnostic,talvitie2017self}.

\begin{lemma} \label{lem:mcvaluebound} Let $\bar{Q}$ be the value
  function returned by applying depth $T$ one-ply Monte Carlo to the
  model $\hat{P}$ with rollout policy $\rho$. Let $\hat{\pi}$ be
  greedy w.r.t. $\bar{Q}$. For any policy $\pi$ and state-distribution
  $\mu$,
  \begin{align*}
    \expect_{s \sim \mu}\big[V^\pi(s) - V^{\hat{\pi}}(s)\big] \le
    \frac{4}{1 - \gamma}\epsilon_{val}^{\xi, \rho, T}  +\epsilon_{mc}, 
    \end{align*}
    where $\xi(s, a) = \frac{1}{2} D_{\mu, \hat{\pi}}(s, a) + \frac{1}{4}
  D_{\mu, \pi}(s, a)
  + \frac{1}{4}\left((1 - \gamma) \mu(s)
    \hat{\pi}_s(a) + \gamma \sum_{z, b} D_{\mu, \pi}(z, b)
    P_{z}^{b}(s) \hat{\pi}_s(a) \right)$ and $\epsilon_{mc} = \frac{4}{1 - \gamma}\|\bar{Q} -
  \hat{Q}^{\rho}_T\|_\infty + \frac{2}{1 - \gamma} \|BV^{\rho}_T -
  V^{\rho}_T\|_{\infty}$ (here $B$ is the Bellman operator).
\end{lemma}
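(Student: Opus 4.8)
The plan is to reduce the statement to the existing DAgger-MC value bound of \citet{talvitie2015agnostic,talvitie2017self}, whose novelty here lies in isolating three distinct error sources: the sampling error of the Monte Carlo estimate $\bar{Q}$, the finite-horizon truncation of the rollouts, and the model's value error under the relevant visitation distributions. First I would invoke the performance difference lemma to rewrite $\expect_{s \sim \mu}[V^\pi(s) - V^{\hat\pi}(s)]$ as a $\frac{1}{1-\gamma}$-discounted sum of advantage-like comparisons, so that the dominant contribution is an expectation over the discounted visitation distribution $D_{\mu, \hat\pi}$ of the gap between acting as $\pi$ would and acting greedily. This term is what ultimately supplies the weight-$\tfrac12$ component of $\xi$.

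Next I would exploit that $\hat\pi$ is greedy with respect to $\bar{Q}$: at every state the action it selects has $\bar{Q}$ value at least as large as the action $\pi$ would take. Converting each such comparison from $\bar{Q}$ to the model's exact finite-horizon value $\hat{Q}^\rho_T$ costs at most $\|\bar{Q} - \hat{Q}^\rho_T\|_\infty$, and converting $\hat{Q}^\rho_T$ to the true $Q^\rho_T$ pays the model value error $\epsilon_{val}^{\xi, \rho, T}$. Charging these replacements both along the trajectory of $\hat\pi$ and along the trajectory of $\pi$, together with the single true-dynamics lookahead step that compares the model's prediction to reality, is what produces the remaining $\tfrac14 D_{\mu, \pi}$ component and the mixed one-step-pushforward component of the mixture $\xi$. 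The factor $4$ in front of both $\epsilon_{val}^{\xi, \rho, T}$ and $\|\bar{Q} - \hat{Q}^\rho_T\|_\infty$ collects the four distinct sites at which these value comparisons are charged, and the shared $\frac{1}{1-\gamma}$ comes from the discounted sum.

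Finally, the gap between the infinite-horizon true values and the $T$-step values actually used by the planner must be controlled. I would bound this truncation bias by the Bellman residual $\|BV^\rho_T - V^\rho_T\|_\infty$: since $V^\rho_T$ is only an approximate fixed point of $B$, a greedy-improvement step taken off of it is suboptimal by an amount governed by this residual, and summing the resulting geometric series yields the $\frac{2}{1-\gamma}\|BV^\rho_T - V^\rho_T\|_\infty$ term of $\epsilon_{mc}$.

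The main obstacle I anticipate is the distribution bookkeeping rather than any single inequality. Unrolling $D_{\mu, \hat\pi}(s, a) = \sum_t \gamma^t D^{t+1}_{\mu, \hat\pi}(s, a)$ and tracking exactly where each advantage term is evaluated is what forces the coefficients $\tfrac12, \tfrac14, \tfrac14$ and the precise form of the mixed term $(1-\gamma)\mu(s)\hat\pi_s(a) + \gamma\sum_{z, b} D_{\mu, \pi}(z, b) P_z^b(s)\hat\pi_s(a)$ to come out correctly; in particular the $(1-\gamma)\mu + \gamma(\text{pushforward})$ structure reflects a single true transition step from $\pi$'s visitation distribution followed by an action drawn from $\hat\pi$. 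The cleanest route is to match the algebra step for step to the cited DAgger-MC proof and to modify only the places where the exact model value is replaced by the sampled, finite-horizon estimate $\bar{Q}$, confining all such modifications to $\epsilon_{mc}$.
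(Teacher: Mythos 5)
Your proposal matches the paper's treatment: the paper gives no proof of this lemma at all, describing it only as ``straightforwardly adapted from an existing bound'' of Talvitie (2015; 2017), and your plan---follow the cited DAgger-MC argument step for step, using the performance-difference decomposition together with the greedy property of $\hat{\pi}$, converting $\bar{Q}$ to $\hat{Q}^{\rho}_T$ to $Q^{\rho}_T$, and absorbing the finite-horizon suboptimality via the Bellman residual---is precisely that adaptation. The structural details you identify (the four charging sites behind the factor $\frac{4}{1-\gamma}$ and the mixture $\xi$, and confining the $\|\bar{Q}-\hat{Q}^{\rho}_T\|_\infty$ and $\|BV^{\rho}_T - V^{\rho}_T\|_\infty$ discrepancies to $\epsilon_{mc}$) are consistent with the stated bound.
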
 
The $\epsilon_{mc}$ term captures error due to properties of the
one-ply MC algorithm: error in the sample average $\bar{Q}$ and the
sub-optimality of the $T$-step value function with respect to
$\rho$. The $\epsilon_{val}^{\xi, \rho, T}$ term captures error due to
the model. The model's usefulness for planning is tied to the accuracy
of the value it assigns to the rollout policy. Thus, in order to
obtain a good plan $\hat{\pi}$, we aim to minimize
$\epsilon_{val}^{\xi, \rho, T}$.

\subsection{Error in the Dynamics Model}

If the reward function is known, a bound on
$\epsilon_{val}^{\xi, \rho, T}$ in terms of the one-step prediction error of
the dynamics model can be adapted from the work of
\citet{ross2012agnostic} .
\begin{lemma} \label{lem:onestepbound} 
For any policy $\pi$ and
  state-action distribution $\xi$,
\begin{align*}
\epsilon_{val}^{\xi, \pi, T} \le \frac{M}{1 - \gamma}\sum_{t = 1}^{T-1} (\gamma^{t} - 
\gamma^{T})\expect_{(s, a) \sim
  D^t_{\xi, \pi}}\big[\|P_s^a - \hat{P}_s^a\|_1\big].
\end{align*}
\end{lemma}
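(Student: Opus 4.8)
The plan is to reduce the $T$-step value error to a telescoping sum of one-step dynamics errors, in the style of the simulation lemma. First I would record the one-step recursions that both value functions obey. Since the reward is assumed known here (so $\hat{R} = R$), unrolling the definition of $Q^\pi_T$ by one step — the $t=1$ term being the immediate reward $R_s^a$ — gives $Q^\pi_T(s,a) = R_s^a + \gamma\,\expects_{s' \sim P_s^a} V^\pi_{T-1}(s')$, and identically $\hat{Q}^\pi_T(s,a) = R_s^a + \gamma\,\expects_{s' \sim \hat{P}_s^a} \hat{V}^\pi_{T-1}(s')$. The shared immediate reward cancels in the difference $\Delta_T(s,a) := Q^\pi_T(s,a) - \hat{Q}^\pi_T(s,a)$; in particular $\Delta_1 \equiv 0$, which will terminate the recursion.

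Next I would split $\Delta_T$ by adding and subtracting $\gamma\,\expects_{s' \sim P_s^a} \hat{V}^\pi_{T-1}(s')$ — crucially advancing by the \emph{true} kernel $P$, so the recursion accumulates the true-model rollout distributions $D^t$ appearing in the statement. This yields $\Delta_T(s,a) = \gamma \sum_{s'} (P_s^a(s') - \hat{P}_s^a(s'))\hat{V}^\pi_{T-1}(s') + \gamma\,\expects_{(s',a') \sim D^2_{s,a,\pi}} \Delta_{T-1}(s',a')$. Iterating this identity and using $\Delta_1 \equiv 0$ collapses it to $\Delta_T(s,a) = \sum_{t=1}^{T-1} \gamma^t\,\expects_{(s',a') \sim D^t_{s,a,\pi}}\big[\sum_{s''}(P_{s'}^{a'}(s'') - \hat{P}_{s'}^{a'}(s''))\hat{V}^\pi_{T-t}(s'')\big]$, where I would track indices so that the rollout depth $t$ and the value horizon $T-t$ stay matched.

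Then I would bound the inner one-step term. Taking absolute values, applying the triangle inequality over the sum in $t$ and pulling $|\cdot|$ inside the expectations, Hölder's inequality gives $|\sum_{s''}(P_{s'}^{a'} - \hat{P}_{s'}^{a'})(s'')\hat{V}^\pi_{T-t}(s'')| \le \|P_{s'}^{a'} - \hat{P}_{s'}^{a'}\|_1 \,\|\hat{V}^\pi_{T-t}\|_\infty$. Because rewards lie in $[0,M]$, the finite-horizon value obeys $\|\hat{V}^\pi_{T-t}\|_\infty \le M\frac{1 - \gamma^{T-t}}{1-\gamma}$, so the coefficient on depth $t$ becomes $\gamma^t \cdot \frac{M}{1-\gamma}(1 - \gamma^{T-t}) = \frac{M}{1-\gamma}(\gamma^t - \gamma^T)$, exactly the factor in the statement. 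Finally, taking the expectation over $(s,a) \sim \xi$ and using $D^t_{\xi,\pi} = \expects_{(s,a)\sim\xi} D^t_{s,a,\pi}$ turns the nested expectations into $\expects_{(s,a)\sim D^t_{\xi,\pi}}[\|P_s^a - \hat{P}_s^a\|_1]$, completing the bound.

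The main obstacle is choosing the decomposition direction correctly: the symmetric choice of advancing by $\hat{P}$ would produce the \emph{model's} rollout distributions $\hat{D}^t$ instead of the $D^t$ demanded by the lemma, so the add-and-subtract must insert $\hat{V}$ (not $V$) and propagate through $P$. Beyond that, the only care needed is bookkeeping the telescoping indices and confirming that the base case $\Delta_1 \equiv 0$ truncates the sum at $T-1$.
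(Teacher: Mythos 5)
Your proof is correct. The paper itself gives no proof of this lemma---it simply cites it as adapted from \citet{ross2012agnostic}---and your telescoping (simulation-lemma) argument, adding and subtracting $\gamma\,\expects_{s' \sim P_s^a} \hat{V}^\pi_{T-1}(s')$ so the recursion propagates through the true dynamics and the one-step $L_1$ errors multiply $\|\hat{V}^\pi_{T-t}\|_\infty \le M\frac{1-\gamma^{T-t}}{1-\gamma}$, is exactly the canonical derivation behind that citation, with the coefficients $\frac{M}{1-\gamma}(\gamma^t - \gamma^T)$ and the distributions $D^t_{\xi,\pi}$ coming out precisely as stated.
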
 

Combining Lemmas \ref{lem:mcvaluebound} and \ref{lem:onestepbound}
yields an overall bound on control performance in terms of model
error. However, recent work \cite{talvitie2017self} offers a tighter
bound in a special case. Let the true dynamics $P$ be deterministic,
and let the rollout policy $\rho$ be {\em blind}
\cite{06icml-psr-exploration}; the action selected by $\rho$ is
conditionally independent of the current state, given the history of
actions. Then for any state-action distribution $\xi$, let
$H^t_{\xi, \rho}$ be the joint distribution over environment state,
model state, and action if a single action sequence is sampled from
$\rho$ and then executed in both the model and the environment. So, $H^1_{\xi, \rho}(s_1, z_1, a_1) =\xi(s_1, a_1) \text{ when } z_1 =
  s_1 \text{ (0 otherwise)}$
and for all $t \ge 2$,
\begin{align*}
H&^t_{\xi, \rho}(s_t, z_t, a_t) = \\
&\expect_{(s_1, a_1) \sim \xi}
\big[\textstyle\sum_{a_{2:t-1}} \rho(a_{2:t} \mid a_1)
P_{s_1}^{a_{1:t-1}}(s_t) \hat{P}_{s_1}^{a_{1:t-1}}(z_t)\big].
\end{align*}
Since $P$ is deterministic, let $\sigma_{s}^{a_{1:t}}$ be the unique
state that results from starting in state $s$ and taking the action
sequence $a_{1:t}$. Then \citet{talvitie2017self} offers the following
result:
\begin{theorem} \label{thm:tightness}
  If $P$ is deterministic, then for any blind policy $\rho$ and any
  state-action distribution $\xi$,
\begin{align}
\epsilon&_{val}^{\xi, \rho, T} \le\ M\sum_{t = 1}^{T} \gamma^{t-1}
                             \expect_{(s, a) \sim \xi}\big[\|D^t_{s, a,
                             \rho} - \hat{D}^t_{s, a, \rho}\|_1\big]\label{eq:multistep}\\
&\le
 2M \sum_{t = 1}^{T-1}\gamma^t
  \expect_{(s, z, a) \sim H^{t}_{\xi, \rho}} \big[1 -
  \hat{P}_z^a(\sigma^a_s)\big]\label{eq:hallucinated}\\
&\le \frac{2M}{1 - \gamma}\sum_{t = 1}^{T-1} (\gamma^{t} - 
\gamma^{T})\expect_{(s, a) \sim
  D^t_{\xi, \rho}}\big[1 - \hat{P}_s^a(\sigma^a_s)\big].\label{eq:onestep}
\end{align}
\end{theorem}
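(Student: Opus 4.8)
The plan is to prove the three inequalities in turn, treating the reward function as known ($\hat{R} = R$, as in Lemma~\ref{lem:onestepbound}) so that all of $\epsilon_{val}^{\xi,\rho,T}$ is attributable to the dynamics. Throughout I write $s^*_k = \sigma^{a_{1:k-1}}_{s_1}$ for the (deterministic) true environment state reached from $s_1$ under the first $k-1$ actions.

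For inequality~\eqref{eq:multistep} I would expand $Q^\rho_T(s,a) - \hat{Q}^\rho_T(s,a)$ over time using the definition of the $T$-step value, so that the $t$-th term is a difference of expectations of the common bounded reward $R$ under $D^t_{s,a,\rho}$ and $\hat{D}^t_{s,a,\rho}$. Since $R \in [0,M]$, each such term is at most $M\|D^t_{s,a,\rho} - \hat{D}^t_{s,a,\rho}\|_1$ by the standard estimate $|\expects_{p}f - \expects_{q}f| \le \|f\|_\infty\|p-q\|_1$; taking the expectation over $(s,a)\sim\xi$ and applying the triangle inequality gives the bound. The $t=1$ term drops out because $D^1_{s,a,\rho} = \hat{D}^1_{s,a,\rho}$ is the point mass at $(s,a)$.

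Inequality~\eqref{eq:hallucinated} is the crux and is where determinism and blindness enter. After removing the vanished $t=1$ term and reindexing, I must bound $\expects_{(s,a)\sim\xi}\|D^{t+1}_{s,a,\rho} - \hat{D}^{t+1}_{s,a,\rho}\|_1$ by $2\,\expects_{(s,z,a)\sim H^t_{\xi,\rho}}[1-\hat{P}^a_z(\sigma^a_s)]$. Because $\rho$ is blind, the whole action sequence $a_{1:t+1}$ is drawn independently of the states, so I can condition on $a_{1:t}$, factor out the final action (whose conditional law is shared by both distributions), and reduce the state-action $L_1$ distance to $\expects_{a_{2:t}\sim\rho}\|P^{a_{1:t}}_{s_1} - \hat{P}^{a_{1:t}}_{s_1}\|_1$ — the triangle inequality over intermediate action sequences here being the source of the looseness. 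Determinism then gives the identity $\|P^{a_{1:t}}_{s_1} - \hat{P}^{a_{1:t}}_{s_1}\|_1 = 2\big(1 - \hat{P}^{a_{1:t}}_{s_1}(s^*_{t+1})\big)$, which supplies the factor of $2$. The key observation is that, by the law of total probability over the intermediate model state, $1 - \hat{P}^{a_{1:t}}_{s_1}(s^*_{t+1}) = \expects_{z_t\sim\hat{P}^{a_{1:t-1}}_{s_1}}\big[1 - \hat{P}^{a_t}_{z_t}(s^*_{t+1})\big]$, i.e.\ the probability that a pure model rollout from $s_1$ misses the true state equals the one-step hallucinated miss probability averaged over the intermediate model state $z_t$ — exactly the quantity carried by $H^t_{\xi,\rho}$. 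Matching this against the definition of $H^t_{\xi,\rho}$ yields the per-$t$ bound, and summing with weights $M\gamma^t$ gives \eqref{eq:hallucinated}.

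For inequality~\eqref{eq:onestep} I convert the hallucinated error (model fed its own state $z$) into a one-step error along the true trajectory (model fed the true state $s$). Splitting $1 - \hat{P}^a_z(\sigma^a_s)$ on the event $\{z=s\}$: it is at most $1 - \hat{P}^a_s(\sigma^a_s)$ there and at most $1$ on $\{z\ne s\}$. Since the $(s,a)$-marginal of $H^t_{\xi,\rho}$ is $D^t_{\xi,\rho}$, the first piece is at most $\mathrm{err}_t := \expects_{(s,a)\sim D^t_{\xi,\rho}}[1-\hat{P}^a_s(\sigma^a_s)]$ and the second is $\Pr_{H^t}[z\ne s]$. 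A union bound over the first step at which the model leaves the true trajectory bounds the latter: the first deviation at step $k+1$ has probability at most $\mathrm{err}_k$, so $\Pr_{H^t}[z\ne s] \le \sum_{k=1}^{t-1}\mathrm{err}_k$, giving $\expects_{H^t}[1-\hat{P}^a_z(\sigma^a_s)] \le \sum_{k=1}^{t}\mathrm{err}_k$. Substituting into \eqref{eq:hallucinated} and swapping the order of the resulting double sum, via $\sum_{t=k}^{T-1}\gamma^t = (\gamma^k-\gamma^T)/(1-\gamma)$, produces \eqref{eq:onestep}. I expect the main obstacle to be inequality~\eqref{eq:hallucinated}: correctly using blindness to draw the action sequence up front (so the environment and model states become, respectively, a deterministic function and an independent random function of that sequence) and spotting the determinism identity that collapses the full $t$-step model-distribution error into twice a single hallucinated one-step miss probability. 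Once that quantity is in hand, the first-deviation union bound and summation swap of \eqref{eq:onestep} are comparatively mechanical.
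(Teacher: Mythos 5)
Your proposal is mathematically sound, but there is nothing in this paper to compare it against: Theorem \ref{thm:tightness} is quoted from \citet{talvitie2017self} and stated here without proof, so your derivation is a reconstruction of an omitted argument rather than an alternative to an in-paper one. Checking it on its own merits, each step holds. For \eqref{eq:multistep}, the per-step estimate $|\expects_p f - \expects_q f| \le \|f\|_\infty \|p - q\|_1$ with $f = R \in [0, M]$ (and the vanishing $t=1$ term, since $D^1_{s,a,\rho} = \hat{D}^1_{s,a,\rho}$ is the point mass at $(s,a)$) is exactly right, and correctly presumes $\hat{R} = R$, which is the setting of this theorem. For \eqref{eq:hallucinated}, your three ingredients are the right ones and fit together: blindness lets the common action-sequence law be pulled outside the $L_1$ norm, determinism gives $\|P^{a_{1:t}}_{s_1} - \hat{P}^{a_{1:t}}_{s_1}\|_1 = 2\big(1 - \hat{P}^{a_{1:t}}_{s_1}(\sigma^{a_{1:t}}_{s_1})\big)$, and the Chapman--Kolmogorov factorization over the intermediate model state $z_t$ converts the $t$-step miss probability into precisely the expectation carried by $H^t_{\xi,\rho}$ (the deterministic environment marginal collapses onto $\sigma^{a_{1:t-1}}_{s_1}$, matching the paper's definition of $H^t_{\xi,\rho}$). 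For \eqref{eq:onestep}, the split on $\{z = s\}$ (valid because the $(s,a)$-marginal of $H^t_{\xi,\rho}$ is indeed $D^t_{\xi,\rho}$ under blindness), the first-deviation union bound, and the index swap $\sum_{t=k}^{T-1}\gamma^t = (\gamma^k - \gamma^T)/(1-\gamma)$ deliver the stated constant. It is worth noting that your machinery parallels what the paper does use elsewhere: your Chapman--Kolmogorov step is the same product factorization appearing in the proof of Theorem \ref{thm:hrwd}, and where you invoke a union bound over the first deviation time, the proof of Theorem \ref{thm:tighter} instead establishes the exact propagation identity $\sum_{s, z \ne s, a} H^{t+1}_{\xi,\rho}(s,z,a) = \expects_{(s,z,a) \sim H^t_{\xi,\rho}}\big[1 - \hat{P}^a_z(\sigma^a_s)\big]$, of which your bound is a slightly looser but entirely sufficient consequence; unrolling that identity and bounding the on-trajectory terms recovers your $\Pr_{H^t}[z \ne s] \le \sum_{k=1}^{t-1} \expects_{(s,a)\sim D^k_{\xi,\rho}}[1 - \hat{P}^a_s(\sigma^a_s)]$.
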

Inequality \ref{eq:onestep} is Lemma \ref{lem:onestepbound}
in the deterministic case, the bound in terms
of the one-step prediction error of $\hat{P}$. Inequality
\ref{eq:multistep} gives the bound in terms of the error in the discounted
distribution of states along $T$-step rollouts. Though this is the
tightest bound of the three, in practice it is difficult to optimize
this objective directly. Inequality \ref{eq:hallucinated} gives the
bound in terms of {\em hallucinated one-step error}, so called because
it considers the accuracy of the model's predictions based on states
generated from its own sample rollouts ($z$), rather than states
generated by the environment ($s$). 

To optimize hallucinated error, the model can be rolled out in
parallel with the environment, and trained to predict the next
environment state from each ``hallucinated'' state in the model
rollout. \citet{talvitie2017self} shows that this approach can
dramatically improve planning performance when the model class is
mispecified. Similar approaches have also had empirical success in
MBRL tasks \cite{talvitie2014model,venkatraman2016improved} and
sequence prediction tasks \cite{venkatraman2015improving,
  oh2015action,bengio2015scheduled}.

\citet{talvitie2017self} shows that the relative tightness of the
hallucinated error bound does not hold for general stochastic dynamics
or for arbitrary rollout policies. However, note that these
assumptions are not as limiting as they first appear. By far the most
common rollout policy chooses actions uniformly randomly, and is thus
blind. Furthermore, though $P$ is assumed to be deterministic, it is
also assumed to be too complex to be practically captured by
$\hat{P}$. From the agent's perspective, un-modeled complexity will
manifest as apparent stochasticity. For example
\citenoun{oh2015action} learned dynamics models of Atari 2600 games,
which are fully deterministic \citep{hausknecht2014neuroevolution};
human players often perceive them to be stochastic due to their
complexity. For the remainder of the paper we focus on the special
case of deterministic dynamics and blind rollout policies.

\section{Incorporating Reward Error}\label{sec:rwderr}

As suggested by \citet{talvitie2017self}, there is a straightforward
extension of Theorem \ref{thm:tightness} to account for reward error.
\begin{theorem}\label{thm:naivereward}
  If $P$ is deterministic, then for any blind policy $\rho$ and any
  state-action distribution $\xi$,
\begin{align}
\epsilon_{val}^{\xi, \rho, T} &\le \sum_{t = 1}^T
    \gamma^{t-1}\expect_{(s', a') \sim D^t_{\xi, \rho}}\big[\big|R_{s'}^{a'}
  - \hat{R}_{s'}^{a'}\big|\big]\label{eq:multistepplusr}\\
&\hspace{.1in} + M\sum_{t = 1}^{T} \gamma^{t-1}
                             \expect_{(s, a) \sim \xi}\big[\|D^t_{s, a,
                             \rho} - \hat{D}^t_{s, a, \rho}\|_1\big]\nonumber\\
&\le \sum_{t = 1}^T
    \gamma^{t-1}\expect_{(s', a') \sim D^t_{\xi, \rho}}\big[\big|R_{s'}^{a'}
  - \hat{R}_{s'}^{a'}\big|\big]\label{eq:hallucinatedplusr}\\
&\hspace{.1in} +
 2M \sum_{t = 1}^{T-1}\gamma^t
  \expect_{(s, z, a) \sim H^{t}_{\xi, \rho}} \big[1 -
  \hat{P}_z^a(\sigma^a_s)\big]\nonumber\\
&\le \sum_{t = 1}^T
    \gamma^{t-1}\expect_{(s', a') \sim D^t_{\xi, \rho}}\big[\big|R_{s'}^{a'}
  - \hat{R}_{s'}^{a'}\big|\big]\\
&\hspace{.1in} + \frac{2M}{1 - \gamma}\sum_{t = 1}^{T-1} (\gamma^{t} - 
\gamma^{T})\expect_{(s, a) \sim
  D^t_{\xi, \rho}}\big[1 - \hat{P}_s^a(\sigma^a_s)\big]\nonumber.
\end{align}
\end{theorem}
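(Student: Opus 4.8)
The three displayed inequalities share the same reward-error term, and their dynamics-error terms are precisely the three successively looser bounds of Theorem~\ref{thm:tightness} (inequalities~\ref{eq:multistep}, \ref{eq:hallucinated}, and~\ref{eq:onestep}). The plan is therefore to prove the first inequality~\ref{eq:multistepplusr} directly, and then to obtain~\ref{eq:hallucinatedplusr} and the final inequality immediately by substituting the chain of Theorem~\ref{thm:tightness} into the dynamics-error term while leaving the reward-error term untouched.

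To prove~\ref{eq:multistepplusr}, I would begin from the definitions of $Q^\rho_T$ and $\hat Q^\rho_T$ and write, for a fixed $(s,a)$,
\[
Q^\rho_T(s,a) - \hat Q^\rho_T(s,a) = \sum_{t=1}^{T}\gamma^{t-1}\left(\expects_{(s',a')\sim D^t_{s,a,\rho}} R_{s'}^{a'} - \expects_{(s',a')\sim \hat D^t_{s,a,\rho}} \hat R_{s'}^{a'}\right).
\]
The decisive step is the choice of what to add and subtract inside each summand: I would insert $\pm\,\expects_{(s',a')\sim D^t_{s,a,\rho}}\hat R_{s'}^{a'}$, evaluating the learned reward $\hat R$ under the \emph{true} $t$-step distribution. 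This splits the summand into a reward-error piece $\expects_{D^t_{s,a,\rho}}[R-\hat R]$ and a distribution-mismatch piece $\expects_{D^t_{s,a,\rho}}[\hat R]-\expects_{\hat D^t_{s,a,\rho}}[\hat R]$. This asymmetric splitting is exactly what lets the argument go through despite $R$ being undefined on model states: $D^t_{s,a,\rho}$ is generated by the true dynamics, so its support is reachable in the environment, where $R$ is defined, while the only reward appearing in the mismatch piece is $\hat R$, which is defined on all of $\hat{\mathcal S}$.

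From there the estimates are routine. For each summand, the triangle inequality together with $|\expects[X]|\le\expects[|X|]$ bounds the reward-error piece by $\expects_{D^t_{s,a,\rho}}[|R-\hat R|]$, and, since $\hat R\in[0,M]$, the mismatch piece is bounded by $M\|D^t_{s,a,\rho}-\hat D^t_{s,a,\rho}\|_1$ (the same crude $M\|\cdot\|_1$ estimate behind inequality~\ref{eq:multistep}). Taking $|Q^\rho_T-\hat Q^\rho_T|$, pulling the absolute value through the $\gamma^{t-1}$-weighted sum by the triangle inequality, substituting these two per-summand bounds, and finally taking $\expects_{(s,a)\sim\xi}$ puts $\epsilon_{val}^{\xi,\rho,T}$ on the left. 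On the right, the reward-error term collapses through $D^t_{\xi,\rho}=\expects_{(s,a)\sim\xi}D^t_{s,a,\rho}$ into $\sum_t\gamma^{t-1}\expects_{(s',a')\sim D^t_{\xi,\rho}}[|R-\hat R|]$, and the mismatch term becomes exactly $M\sum_t\gamma^{t-1}\expects_{(s,a)\sim\xi}[\|D^t_{s,a,\rho}-\hat D^t_{s,a,\rho}\|_1]$, which is~\ref{eq:multistepplusr}.

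I do not anticipate a real obstacle: the core is a single add-and-subtract followed by two applications of the triangle inequality, and Theorem~\ref{thm:tightness} discharges the remaining two inequalities with no further work. The only point demanding genuine care is the asymmetric decision to charge reward error under the true distribution rather than the model distribution, which is what avoids evaluating $R$ on hallucinated states. Notably, that same asymmetry is the weakness the paper subsequently refines, since it penalizes reward error on states the model may rarely generate while ignoring $\hat R$'s behavior on the states it actually visits.
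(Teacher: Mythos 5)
Your proposal is correct and matches the paper's own proof essentially step for step: the paper likewise adds and subtracts $D^t_{s,a,\rho}(s',a')\hat{R}_{s'}^{a'}$ inside the sum (charging reward error under the true distribution and evaluating only $\hat{R}$ under the mismatched distributions), bounds the mismatch term by $M\|D^t_{s,a,\rho}-\hat{D}^t_{s,a,\rho}\|_1$ using $\hat{R}\in[0,M]$, and then obtains inequalities~\ref{eq:hallucinatedplusr} and the final bound immediately from the chain in Theorem~\ref{thm:tightness}. Your closing observation about the asymmetry of the decomposition is also exactly the critique the paper raises before deriving Theorem~\ref{thm:hrwd}.
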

\begin{proof}
  The derivation of inequality \ref{eq:multistepplusr} is below. The
  rest follow immediately from Theorem \ref{thm:tightness}.
  \begin{align*}
    &\epsilon_{val}^{\xi, \rho, T} =\expect_{(s, a) \sim \xi} \big[|Q_T^\rho(s, a) - \hat{Q}_T^\rho(s,a)|\big]&\\
    &= \expect_{(s, a) \sim \xi} \Bigg[\bigg| \sum_{t = 1}^T \gamma^{t-1}\sum_{(s',
      a')}\Big(D^t_{s, a, \rho}(s', a')R_{s'}^{a'}&\\
    &\hspace{1.8in}- \hat{D}^t_{s, a, \rho}(s', a')\hat{R}_{s'}^{a'} \Big) \bigg| \Bigg]&\\
    &= \expect_{(s, a) \sim \xi} \Bigg[\bigg| \sum_{t = 1}^T \gamma^{t-1}\sum_{(s',
      a')}\Big(D^t_{s, a, \rho}(s', a')R_{s'}^{a'}&\\
    &\hspace{.8in}- D^t_{s, a, \rho}(s', a')\hat{R}_{s'}^{a'} + D^t_{s, a, \rho}(s', a')\hat{R}_{s'}^{a'}&\\
    &\hspace{1.8in}- \hat{D}^t_{s, a, \rho}(s', a')\hat{R}_{s'}^{a'} \Big) \bigg| \Bigg]&\\
    &= \expect_{(s, a) \sim \xi} \Bigg[\bigg| \sum_{t = 1}^T \gamma^{t-1}\sum_{(s',
      a')}\Big(D^t_{s, a, \rho}(s', a')(R_{s'}^{a'} -
      \hat{R}_{s'}^{a'})&\\
    &\hspace{0.9in}+ (D^t_{s, a, \rho}(s', a') - \hat{D}^t_{s, a, \rho}(s', a'))\hat{R}_{s'}^{a'} \Big) \bigg| \Bigg]&
\end{align*}
\begin{align*}
    &\le \sum_{t = 1}^T
      \gamma^{t-1}\expect_{(s', a') \sim D^t_{\xi, \rho}}\big|R_{s'}^{a'} - \hat{R}_{s'}^{a'}\big|&\\
    &\hspace{0.7in}+ M\sum_{t = 1}^T
      \gamma^{t-1}\expect_{(s, a) \sim \xi} \big[\big\|D^t_{s, a, \rho} - \hat{D}^t_{s, a, \rho}\big\| \big],&
  \end{align*}
which gives the result.
\end{proof}

As is typical, these bounds break the value error into two parts:
reward error and dynamics error. The reward error measures the
accuracy of the reward model in environment states encountered by
policy $\rho$. The dynamics error measures the probability that the
model will generate the correct states in rollouts, effectively
assigning maximum reward error ($M$) when the dynamics model generates
incorrect states. This view corresponds to common MBRL practice:
train the dynamics model to assign high probability to
correct environment states and the reward model to accurately map environment
states to rewards. However, as discussed in Section \ref{sec:example},
these bounds are overly conservative (and thus loose): generating an
erroneous state need not be catastrophic if the associated reward is
still reasonable. We can derive a bound that accounts for this.

\begin{theorem}\label{thm:hrwd}
  If $P$ is deterministic, then for any blind policy $\rho$ and any
  state-action distribution $\xi$,
  \begin{align*}
    \epsilon_{val}^{\xi, \rho, T} &\le \sum_{t=1}^T \gamma^{t-1} \expects_{(s, z, a) \sim H^t_{\xi, \rho}}
      \big[\big| R_{s}^{a} - \hat{R}_{z}^{a}\big|\big].
  \end{align*}
\end{theorem}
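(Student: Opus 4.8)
The plan is to open exactly as in the proof of Theorem~\ref{thm:naivereward}, but rather than split the value difference into a reward term plus a dynamics term, to keep the environment and model rollouts coupled through $H^t_{\xi,\rho}$ and collapse everything into a single reward discrepancy. First I would fix a starting pair $(s,a)$ (equivalently take $\xi$ to be a point mass and integrate at the very end, using $H^t_{\xi,\rho} = \expects_{(s,a)\sim\xi} H^t_{s,a,\rho}$) and expand the value difference as the discounted sum over $t$ of $\expects_{(s',a')\sim D^t_{s,a,\rho}}[R_{s'}^{a'}] - \expects_{(z',a')\sim \hat D^t_{s,a,\rho}}[\hat R_{z'}^{a'}]$, i.e.\ the difference between the expected true reward along the environment rollout and the expected model reward along the model rollout at each step.

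The key step is to re-express these two separate marginal expectations as a single expectation under the joint coupling $H^t_{s,a,\rho}$. Marginalizing the definition of $H^t$ over the model state (using $\sum_{z_t}\hat P^{a_{1:t-1}}_{s_1}(z_t)=1$) recovers exactly $D^t_{s,a,\rho}$, while marginalizing over the environment state (using $\sum_{s_t}P^{a_{1:t-1}}_{s_1}(s_t)=1$) recovers $\hat D^t_{s,a,\rho}$; crucially, since $\rho$ is blind, the action at step $t$ depends only on the shared action history, so both marginals carry the \emph{same} action $a_t$. Hence the per-step difference collapses:
\[
\expects_{(s',a')\sim D^t_{s,a,\rho}}[R_{s'}^{a'}] - \expects_{(z',a')\sim \hat D^t_{s,a,\rho}}[\hat R_{z'}^{a'}] = \expects_{(s',z',a')\sim H^t_{s,a,\rho}}\big[R_{s'}^{a'} - \hat R_{z'}^{a'}\big].
\]

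From there the argument is routine: apply the triangle inequality across the sum over $t$ and then $|\expects X|\le\expects|X|$ inside each term, so that $|Q_T^\rho(s,a) - \hat Q_T^\rho(s,a)| \le \sum_{t=1}^T \gamma^{t-1}\, \expects_{(s',z',a')\sim H^t_{s,a,\rho}}[|R_{s'}^{a'} - \hat R_{z'}^{a'}|]$, and finally take the expectation over $(s,a)\sim\xi$, folding $\expects_{(s,a)\sim\xi} H^t_{s,a,\rho}$ back into $H^t_{\xi,\rho}$ to reach the claimed bound.

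I expect the main obstacle to be the middle step---confirming that the two marginal expectations genuinely share a common action marginal so that they can be merged into one expectation of the pointwise difference $R_s^a - \hat R_z^a$ evaluated under the \emph{same} action. This is precisely where blindness is indispensable: without it the actions along the environment rollout would depend on $s$ and those along the model rollout on $z$, the action marginals would diverge, and $H^t$ could no longer reconcile $R_s^a$ and $\hat R_z^a$ under a common $a$. It is also exactly the step that separates this result from the looser Theorem~\ref{thm:naivereward}, which instead adds and subtracts $D^t\hat R$ and thereby pays the full reward range $M$ whenever the dynamics err, rather than only the actual reward gap $|R_s^a-\hat R_z^a|$ between the true and hallucinated states.
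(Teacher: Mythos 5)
Your proposal is correct and takes essentially the same route as the paper's proof: the paper's key algebraic step, rewriting $\sum_{s_t} P_{s_1}^{a_{1:t-1}}(s_t)R_{s_t}^{a_t} - \sum_{z_t} \hat{P}_{s_1}^{a_{1:t-1}}(z_t)\hat{R}_{z_t}^{a_t}$ as $\sum_{s_t,z_t} P_{s_1}^{a_{1:t-1}}(s_t)\hat{P}_{s_1}^{a_{1:t-1}}(z_t)\big(R_{s_t}^{a_t}-\hat{R}_{z_t}^{a_t}\big)$ by multiplying each marginal by the unit mass of the other, is exactly your coupling observation that $H^t_{\xi,\rho}$ has marginals $D^t_{\xi,\rho}$ and $\hat{D}^t_{\xi,\rho}$ with a shared action, so the difference of marginal expectations collapses to a single expectation of the pointwise reward gap. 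Your identification of blindness as what allows the action sequence to be factored out and shared across the two rollouts also matches the paper's use of $\rho(a_{2:t}\mid a_1)$ in its expansion.
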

\begin{proof}
  \begin{align*}
&\epsilon_{val}^{\xi, \rho, T} = \expects_{(s_1, a_1) \sim \xi}\big[\big|Q_\rho^T(s_1, a_1) - \hat{Q}_\rho^T(s_1, a_1)\big|\big]&\\
&=\expect_{(s_1, a_1) \sim \xi}\Bigg[\Bigg|\sum_{t=1}^T \gamma^{t-1}
\sum_{s_t, a_t} D^t_{s_1, a_1, \rho}(s_t, a_t)R_{s_t}^{a_t}&\\
&\hspace{1.6in} - \sum_{z_t, a_t}
\hat{D}^t_{s_1, a_1, \rho}(z_t, a_t) \hat{R}_{z_t}^{a_t}\Bigg|\Bigg]&\\
&=\expect_{(s_1, a_1) \sim \xi}\Bigg[\Bigg|\big(R_{s_1}^{a_1} -
  \hat{R}_{s_1}^{a_1}\big) + \sum_{t=2}^T \gamma^{t-1}
\sum_{a_{2:t}} \rho(a_{2:t} \mid a_1)&\\
&\hspace{0.5in} \bigg( \sum_{s_t}
  P_{s_1}^{a_{1:t-1}}(s_t)R_{s_t}^{a_t} - \sum_{z_t}
\hat{P}_{s_1}^{a_{1:t-1}}(z_t) \hat{R}_{z_t}^{a_t}\bigg)\Bigg|\Bigg].&
\end{align*}
Now note that for $t \ge 2$,
\begin{align*}
  &\sum_{s_t} P_{s_1}^{a_{1:t-1}}(s_t)R_{s_t}^{a_t} - \sum_{z_t}
  \hat{P}_{s_1}^{a_{1:t-1}}(z_t) \hat{R}_{z_t}^{a_t}\\
&= \sum_{s_t}
P_{s_1}^{a_{1:t-1}}(s_t)R_{s_t}^{a_t}\sum_{z_t}
\hat{P}_{s_1}^{a_{1:t-1}}(z_t)&\\
&\hspace{1.1in} - \sum_{z_t}
\hat{P}_{s_1}^{a_{1:t-1}}(z_t) \hat{R}_{z_t}^{a_t}\sum_{s_t}
P_{s_1}^{a_{1:t-1}}(s_t) &\\
&=\sum_{s_t, z_t} P_{s_1}^{a_{1:t-1}}(s_t)
  \hat{P}_{s_1}^{a_{1:t-1}}(z_t)\big( R_{s_t}^{a_t} -
  \hat{R}_{z_t}^{a_t}\big)&
\end{align*}
Thus
\begin{align*}
&\epsilon_{val}^{\xi, \rho, T} \le \expect_{(s_1, a_1) \sim
                                \xi}\Bigg[\big|R_{s_1}^{a_1}-\hat{R}_{s_1}^{a_1}\big|
                                + \sum_{t=2}^T \gamma^{t-1}&\\
&\sum_{a_{2:t}} \rho(a_{2:t} \mid a_1)\sum_{s_t, z_t} P_{s_1}^{a_{1:t-1}}(s_t) \hat{P}_{s_1}^{a_{1:t-1}}(z_t)\big| R_{s_t}^{a_t} - \hat{R}_{z_t}^{a_t}\big|\Bigg]&\\
&= \sum_{t=1}^T \gamma^{t-1} \expect_{(s, z, a) \sim H^t_{\xi, \rho}} \big[\big| R_{s}^{a} - \hat{R}_{z}^{a}\big|\big].&\qedhere
\end{align*}
\end{proof}

Similar to the hallucinated one-step error for the dynamics model
(inequality \ref{eq:hallucinated}), Theorem \ref{thm:hrwd} imagines
that the model and the environment are rolled out in parallel. It
measures the error between the rewards generated in the model rollout
and the rewards in the corresponding steps of the environment
rollout. We call this the {\em hallucinated reward error}. However,
unlike the bounds in Theorem \ref{thm:naivereward}, which are focused on
the model placing high probability on ``correct'' states, the
hallucinated reward error may be small even if the state sequence
sampled from the dynamics model is ``incorrect'', as long as the
sequence of {\em rewards} is similar. As such, we can show that this
bound is tighter than inequality \ref{eq:hallucinatedplusr} and thus
more closely related to planning performance.

\begin{theorem}\label{thm:tighter}
  If $P$ is deterministic, then for any blind policy $\rho$ and any
  state-action distribution $\xi$,
  \begin{align*}
    \sum_{t=1}^T& \gamma^{t-1} \expects_{(s, z, a) \sim H^t_{\xi, \rho}}
      \big[\big| R_{s}^{a} - \hat{R}_{z}^{a}\big|\big]\\
                                  &\le \sum_{t = 1}^T
                                    \gamma^{t-1}\expect_{(s', a') \sim D^t_{\xi, \rho}}\big[\big|R_{s'}^{a'}
                                    - \hat{R}_{s'}^{a'}\big|\big]\\
                                  &\hspace{.6in} +
                                    2M \sum_{t = 1}^{T-1}\gamma^t
                                    \expect_{(s, z, a) \sim H^{t}_{\xi, \rho}} \big[1 -
                                    \hat{P}_z^a(\sigma^a_s)\big].
  \end{align*}
\end{theorem}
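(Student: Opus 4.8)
The plan is to reduce the claim to two facts by inserting $\hat{R}_s^a$ and applying the triangle inequality: for every $t$ and every $(s,z,a)$ in the support of $H^t_{\xi,\rho}$, $|R_s^a - \hat{R}_z^a| \le |R_s^a - \hat{R}_s^a| + |\hat{R}_s^a - \hat{R}_z^a|$. Summing the discounted expectations over $t$ splits the left-hand side into a \emph{reward-accuracy} part, $\sum_t \gamma^{t-1}\expects_{(s,z,a)\sim H^t_{\xi,\rho}}[|R_s^a - \hat{R}_s^a|]$, and a \emph{state-discrepancy} part, $\sum_t \gamma^{t-1}\expects_{(s,z,a)\sim H^t_{\xi,\rho}}[|\hat{R}_s^a - \hat{R}_z^a|]$. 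I expect the first part to reproduce the reward term on the right-hand side exactly and the second to be absorbed into the dynamics term.

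For the reward-accuracy part, the integrand $|R_s^a - \hat{R}_s^a|$ does not involve the model state $z$, so I would marginalize $H^t_{\xi,\rho}$ over $z$. Because $\sum_{z}\hat{P}_{s_1}^{a_{1:t-1}}(z) = 1$, this marginal over environment state and action is exactly $D^t_{\xi,\rho}$, and the part becomes $\sum_{t=1}^T \gamma^{t-1}\expects_{(s',a')\sim D^t_{\xi,\rho}}[|R_{s'}^{a'} - \hat{R}_{s'}^{a'}|]$, the first summand on the right, with no re-indexing needed.

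For the state-discrepancy part, write $g_t = \expects_{(s,z,a)\sim H^t_{\xi,\rho}}[|\hat{R}_s^a - \hat{R}_z^a|]$. Since $H^1_{\xi,\rho}$ forces $z = s$, we have $g_1 = 0$. For $t \ge 2$ I would expand $H^t_{\xi,\rho}$ by its definition and use determinism of $P$ to replace the environment state by the point mass at $\sigma_{s_1}^{a_{1:t-1}}$; the single model state equal to that point contributes zero, while every other model state contributes at most $M$ (rewards lie in $[0,M]$). This gives $g_t \le M\,\expects[1 - \hat{P}_{s_1}^{a_{1:t-1}}(\sigma_{s_1}^{a_{1:t-1}})]$, the expected probability that a length-$(t-1)$ model rollout fails to land on the correct environment state.

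The crux, and the step I expect to be the main obstacle, is to recognize this residual probability as the \emph{one-step} hallucinated dynamics error at step $t-1$, i.e. to show $\expects[1 - \hat{P}_{s_1}^{a_{1:t-1}}(\sigma_{s_1}^{a_{1:t-1}})] = \expects_{(s,z,a)\sim H^{t-1}_{\xi,\rho}}[1 - \hat{P}_z^a(\sigma_s^a)]$. This is the same chain-rule manipulation used in the proof of Theorem~\ref{thm:hrwd}: factor the last model transition via $\hat{P}_{s_1}^{a_{1:t-1}} = \sum_{z_{t-1}}\hat{P}_{s_1}^{a_{1:t-2}}(z_{t-1})\hat{P}_{z_{t-1}}^{a_{t-1}}$, use determinism so that the correct target $\sigma_{s_{t-1}}^{a_{t-1}}$ equals $\sigma_{s_1}^{a_{1:t-1}}$, and recognize the remaining convolution over $(s_{t-1},z_{t-1})$ as $H^{t-1}_{\xi,\rho}$; the action bookkeeping is clean because $\rho$ is blind. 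Substituting and re-indexing yields $\sum_{t=1}^T\gamma^{t-1}g_t \le M\sum_{t=1}^{T-1}\gamma^{t}\expects_{(s,z,a)\sim H^t_{\xi,\rho}}[1 - \hat{P}_z^a(\sigma_s^a)]$, which combined with the reward-accuracy part gives the claim. Note that this argument actually produces the constant $M$ rather than $2M$; I would keep $2M$ in the statement only so that the right-hand side is literally the bound of inequality~\ref{eq:hallucinatedplusr}, making the intended comparison (the hallucinated reward error is no looser than the naive bound) immediate.
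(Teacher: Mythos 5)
Your proposal is correct and follows essentially the same route as the paper's proof: the paper splits the expectation into diagonal ($z = s$) and off-diagonal ($z \ne s$) parts rather than inserting $\hat{R}_s^a$ via the triangle inequality, but both reductions produce the identical two terms --- the reward error under $D^t_{\xi,\rho}$ (using that the $z$-marginal of $H^t_{\xi,\rho}$ is $D^t_{\xi,\rho}$) and $M$ times the state-mismatch probability --- and both then invoke the same chain-rule/determinism/blindness recursion identifying $\Pr_{H^{t+1}_{\xi,\rho}}[z \ne s]$ with $\expects_{(s,z,a)\sim H^t_{\xi,\rho}}\big[1 - \hat{P}_z^a(\sigma_s^a)\big]$. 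Your closing observation also matches the paper: its proof likewise establishes the bound with constant $M$ rather than $2M$, the $2M$ in the statement being slack retained so that the right-hand side coincides literally with inequality \ref{eq:hallucinatedplusr}.
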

\begin{proof}
  \begin{align*}
    \sum_{t=1}^T& \gamma^{t-1} \expect_{(s, z, a) \sim H^t_{\xi, \rho}}
    \big[\big| R(s, a) - \hat{R}(z, a)\big|\big]\\
    &= \sum_{t=1}^T \gamma^{t-1} \sum_{s, z, a} H^t_{\xi, \rho}(s, z, a)\big| R(s, a) - \hat{R}(z, a)\big|\\
    &= \sum_{t=1}^T \gamma^{t-1} \sum_{s, a} H^t_{\xi, \rho}(s,
      s, a) \big|R(s, a) - \hat{R}(s, a)\big|\\
                &\hspace{.1in} + \sum_{t=1}^T \gamma^{t-1}\sum_{s,
      z \ne s, a} H^t_{\xi, \rho}(s, z, a)\big| R(s, a) - \hat{R}(z,
      a)\big|.
  \end{align*}
  This breaks the expression into two terms. Now consider the first term:
  \begin{align}
    \sum_{t=1}^T& \gamma^{t-1} \sum_{s, a} H^t_{\xi, \rho}(s,
    s, a) \big|R(s, a) - \hat{R}(s, a)\big|\nonumber\\
    &\le \sum_{t=1}^T \gamma^{t-1}\sum_{s, a} D^t_{\xi, \rho}(s, a)
      \big|R(s, a) - \hat{R}(s, a)\big|. \label{eq:term1}
  \end{align}
  Now consider the second term:
  \begin{align*}
    \sum_{t=1}^T& \gamma^{t-1}\sum_{s, z \ne s, a} H^t_{\xi, \rho}(s, z, a)\big| R(s, a) - \hat{R}(z,
      a)\big|\\
    &\le M \sum_{t=1}^{T} \gamma^{t-1}\sum_{s,
      z \ne s, a} H^{t}_{\xi, \rho}(s, z, a).
  \end{align*}
  Recall that $H^1_{\xi, \rho}(s, z, a) = 0$ if $s \ne z$. Thus,
  \begin{align}
    &M \sum_{t=1}^{T} \gamma^{t-1}\sum_{s,
      z \ne s, a} H^{t}_{\xi, \rho}(s, z, a)\nonumber\\
                    &= M \sum_{t=1}^{T-1} \gamma^{t}\sum_{s, z \ne s,
                                         a} H^{t+1}_{\xi, \rho}(s, z,
                                         a)\nonumber\\ 
    &= M \sum_{s', z', a'}
      \Bigg(\sum_{s, z \ne s} P^{a'}_{s'}(s) \hat{P}^{a'}_{z'}(z)
      \Bigg) \sum_{t=1}^{T-1} \gamma^{t} H^{t}_{\xi, \rho}(s', z', a')\nonumber\\
    &= M \sum_{s', z', a'}
      \big(1 - \hat{P}^{a'}_{z'}(\sigma^{a'}_{s'})\big) \sum_{t=1}^{T-1} \gamma^{t} H^{t}_{\xi, \rho}(s', z', a')\nonumber\\
    &= M \sum_{t=1}^{T-1} \gamma^t
      \expect_{(s, z, a) \sim H^t_{\xi, \rho}} [1 -
      \hat{P}^{a}_{z}(\sigma^{a}_{s})].  \label{eq:term2}
  \end{align}
  Combining lines \ref{eq:term1} and \ref{eq:term2} yields the result.
\end{proof}

The next section discusses the practical and conceptual implications of this result
for MBRL algorithms and extends an existing MBRL algorithm to
incorporate this insight.

\section{Implications for MBRL}\label{sec:implications}

This is not the first observation that the reward model should be
specialized to the dynamics model . \citet{sorg2010internal} argued as
we have that when the model or planner are limited in some way, reward
functions other than the true reward may lead to better planning
performance. Accordingly, policy gradient approaches have been
employed to learn reward functions for use with online planning
algorithms, providing a benefit even when the reward function is known
\cite{sorg2010reward,sorg2011optimal,bratman2012strong,guo2016deep}. \citet{tamar2016value}
take this idea to its logical extreme, treating the entire model and
even the planning algorithm itself as a policy parameterization,
adapting them to directly improve control performance rather than to
minimize any measure of prediction error. Though appealing in its
directness, this approach offers little theoretical insight into what
makes a model useful for planning. Furthermore, there are advantages
to optimizing quantities other than control performance; this allows
the model to exploit incoming data even when it is unclear how to
improve the agent's policy (for instance if the agent has seen little
reward). Theorem \ref{thm:hrwd} provides more specific guidance about
how to choose amongst a set of flawed models. Rather than attempting
to directly optimize control performance, this result suggests that we
can take advantage of reward error signals while still offering
guarantees in terms of control performance.

It is notable that, unlike Theorem \ref{thm:naivereward}, Theorem
\ref{thm:hrwd} does not contain a term measuring dynamics
error. Certainly the dynamics model is implicitly important; for some
choices of $\hat{P}$ the hallucinated reward error can be made very
small while for others it may be irreducibly high (for instance if
$\hat{P}$ simply loops on a single state). Nevertheless, low
hallucinated reward error does not require that the dynamics model
place high probability on ``correct'' states. In fact, it may be that
dynamics entirely unrelated to the environment yield the best reward
predictions. This intriguingly suggests that the dynamics
model and reward model parameters could be adapted together to
optimize hallucinated reward error. Arguably, the recently introduced Predictrons
\cite{silver2017predictron} and Value Prediction Networks
\cite{oh2017value} are attempts to do just this -- they adapt the
model's dynamics solely to improve reward prediction. We can see
Theorem \ref{thm:hrwd} as theoretical support for these approaches and
encouragement of more study in this direction. Still, in practice it
may be much harder to learn to predict reward sequences than state
sequences, especially when the reward signal is sparse. Also, the
relationship between reward prediction error and dynamics model parameters
can be highly complex, which may make theoretical performance
guarantees difficult.

Another possible interpretation of Theorem \ref{thm:hrwd} is that the
reward model should be customized to the dynamics model. That is, if
we hold the dynamics model fixed, then the result gives a clear
objective for the reward model. Theorem \ref{thm:tighter} suggests an
algorithmic structure where the dynamics model is trained via its own
objective, and the reward model is then trained to minimize
hallucinated error with respect to the learned dynamics model. The
clear downside of this approach is that it will not in general find
the best combination of dynamics model and reward model; it could be
that a less accurate dynamics model results in lower hallucinated
reward error. The advantage is that it allows us to effectively exploit the
prediction error signal for the dynamics model and removes the
circular dependence between the dynamics model and the reward model.

In this paper we explore this avenue by extending the existing
Hallucinated DAgger-MC algorithm \cite{talvitie2017self}. Because the
resulting algorithm is very similar to the original, we leave a
detailed description and analysis to the appendix and here focus on
key, high-level points. Section \ref{sec:experiments} presents
empirical results illustrating the practical impact of training the reward model
to minimize hallucinated error.

\subsection{Hallucinated DAgger-MC with Reward Learning}\label{sec:hdaggermc}

The ``Data Aggregator'' (DAgger) algorithm \citep{ross2012agnostic}
was the first practically implementable MBRL algorithm with
performance guarantees agnostic to the model class. It did, however,
require that the planner be near optimal. DAgger-MC
\citep{talvitie2015agnostic} relaxed this assumption, accounting for
the limitations of a particular suboptimal planner (one-ply
MC). Hallucinated DAgger-MC (or H-DAgger-MC) \cite{talvitie2017self}
altered DAgger-MC to optimize the hallucinated error, rather than the
one-step error. All of these algorithms were presented under the
assumption that the reward function was known {\em a priori}. As we
will see in Section \ref{sec:experiments}, the reward function cannot
be ignored; even when the reward function is given, these algorithms
can fail catastrophically due to the interaction between the reward
function and small errors in the dynamics model.

At a high level, H-DAgger-MC proceeds in iterations. In each iteration
a batch of data is gathered by sampling state-action pairs using a
mixture of the current plan and an ``exploration distribution'' (to
ensure that important states are visited, even if the plan would not
visit them). The rollout policy is used to generate parallel rollouts
in the environment and model from these sampled state-action pairs,
which form the training examples (with model state as input and
environment state as output). The collected data is used to update the
dynamics model, which is then used to produce a new plan to be used in
the next iteration. We simply augment H-DAgger-MC, adding a reward
learning step to each iteration (rather than assuming the reward is
given). In each rollout, training examples mapping ``hallucinated''
model states to the real environment rewards are collected and used to
update the reward model.

The extended H-DAgger-MC algorithm offers theoretical guarantees
similar to those of the original algorithm. Essentially, if
\begin{itemize}
\item the exploration distribution is similar to the state visitation
  distribution of a good policy, 
\item $\epsilon_{mc}$ is small,
\item the learning algorithms for the dynamics model and reward model
  are both no-regret, and
\item the reward model class $\mathcal{R}$ contains a low hallucinated
  reward error model with respect to the lowest hallucinated
  prediction error model in $\mathcal{P}$,
\end{itemize}
then in the limit H-DAgger-MC will produce a good policy. As discussed
in Section \ref{sec:implications}, this does {\em not} guarantee that
H-DAgger-MC will find the best performing combination of dynamics
model and reward model, since the training of the dynamics model does
not take hallucinated reward error into account. It is, however, an
improvement over the original H-DAgger-MC result in that good
performance can be assured even if there is no low error
dynamics model in $\mathcal{P}$, as long as there is a low error
reward model in $\mathcal{R}$.

For completeness' sake, a more detailed description and analysis of the
algorithm can be found in the appendix. Here we turn to an empirical
evaluation of the algorithm.

\section{Experiments}\label{sec:experiments}

In this section we illustrate the impact of optimizing hallucinated
reward error in the Shooter example described in Section
\ref{sec:intro} using both DAgger-MC and H-DAgger-MC\footnote{Source
  code for these experiments is available at \url{http://github.com/etalvitie/hdaggermc}.}. The
one-ply MC planner used 50 uniformly random rollouts of depth 20 per
action at every step. The exploration distribution was generated by
following the optimal policy with $(1 - \gamma)$ probability of
termination at each step. The discount factor was $\gamma = 0.9$. In
each iteration 500 training rollouts were generated and the resulting
policy was evaluated in an episode of length 30. The discounted return
obtained by the policy in each iteration is reported, averaged over 50
trials.

The dynamics model for each pixel was learned using Context Tree
Switching \citep{veness2012context}, similar to the FAC-CTW algorithm
\citep{veness11montecarlo}. At each position the model takes as input
the values of the pixels in a $w \times h$ neighborhood around the
position in the previous timestep. Data was shared across all
positions. The reward was approximated with a linear function for each
action, learned via stochastic weighted gradient descent. The feature
representation contained a binary feature for each possible
$3 \times 3$ configuration of pixels at each position. This
representation admits a perfectly accurate reward model. The
qualitative observations presented in this section were robust to a
wide range of choices of step size for gradient descent. Here, in each
experiment the best performing step size for each approach is selected
from 0.005, 0.01, 0.05, 0.1, and 0.5.

In the experiments a practical alteration has been made to the
H-DAgger-MC algorithm. H-DAgger-MC requires an ``unrolled''
dynamics model (with a separate model for each step of the rollout,
each making predictions based on the output of the previous
model). While this is important for H-DAgger-MC's theoretical
guarantees, \citet{talvitie2017self} found empirically that a single
dynamics model for all steps could be learned, provided that the training rollouts
had limited depth. Following \citet{talvitie2017self}, in the first 10
iterations only the first example from each training rollout is added
to the dynamics model dataset; thereafter only the first two examples
are added. The entire rollout was used to train the reward
model. DAgger-MC does not require an unrolled dynamics model or
truncated training rollouts and was
implemented as originally presented, with
a single dynamics model and full training rollouts \cite{talvitie2015agnostic}. 

\subsection{Results}

We consider both DAgger-MC and H-DAgger-MC with a perfect reward
model, a reward model trained only on environment states during
rollouts, and a reward model trained on ``hallucinated'' states as described in
Section \ref{sec:hdaggermc}. The perfect reward model is one that
someone familiar with the rules of the game would likely specify; it
simply checks for the presence of explosions in the three target
positions and gives the appropriate value if an explosion is present
or 0 otherwise (subtracting 1 if the action is ``shoot''). Results are
presented in three variations on the Shooter problem.

\begin{figure*}
  \centering
  \subfloat[No model limitations]{\includegraphics[width=0.32\textwidth]{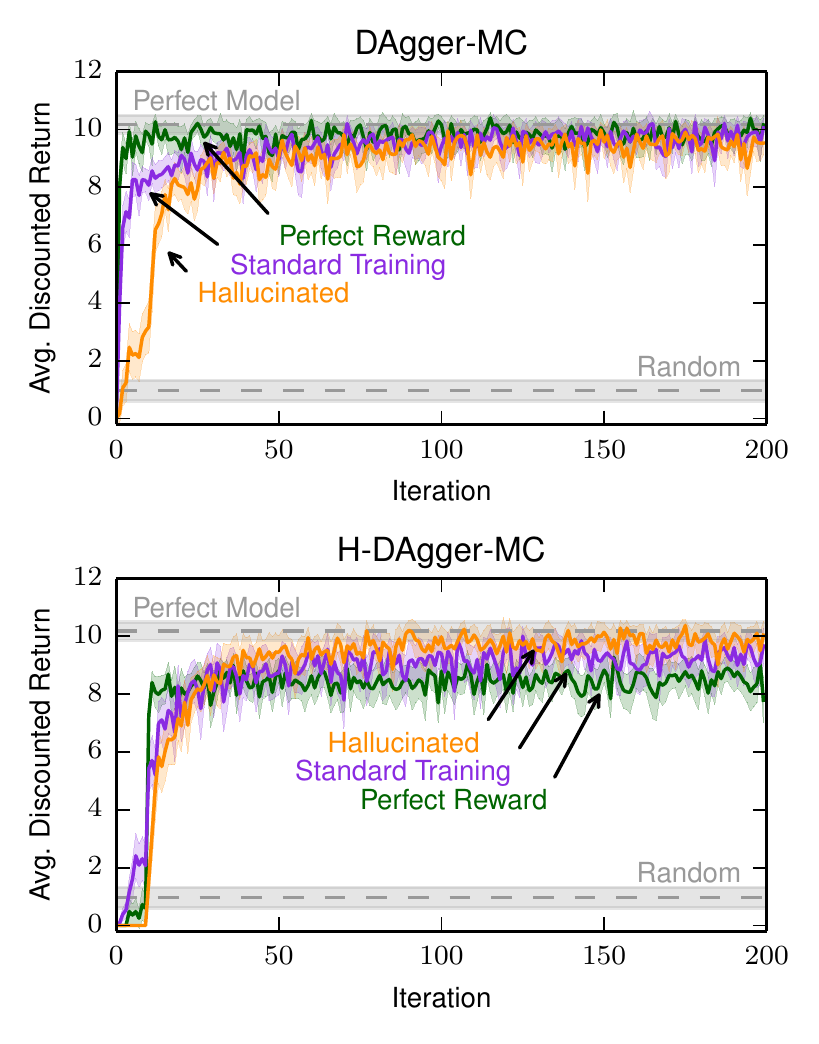}}\hfill
  \subfloat[Moving bullseyes (2nd-order Markov)]{\includegraphics[width=0.32\textwidth]{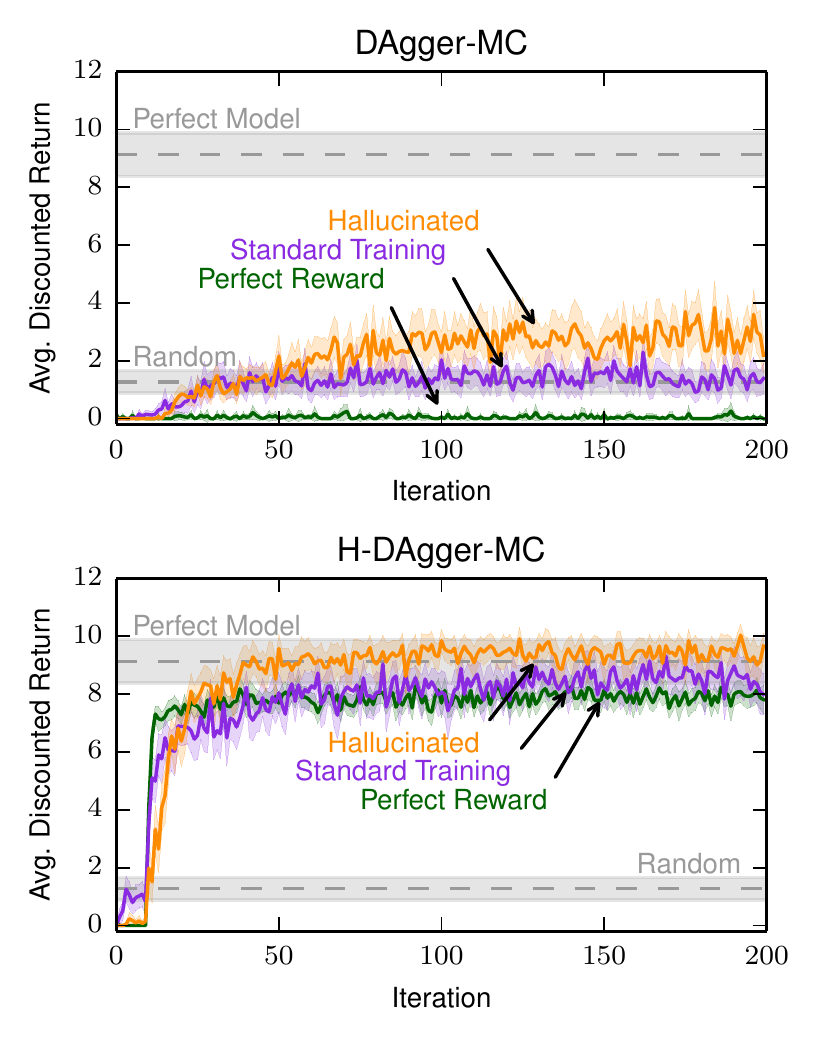}}\hfill
  \subfloat[Pixel models use $5 \times 7$ neighborhood]{\includegraphics[width=0.32\textwidth]{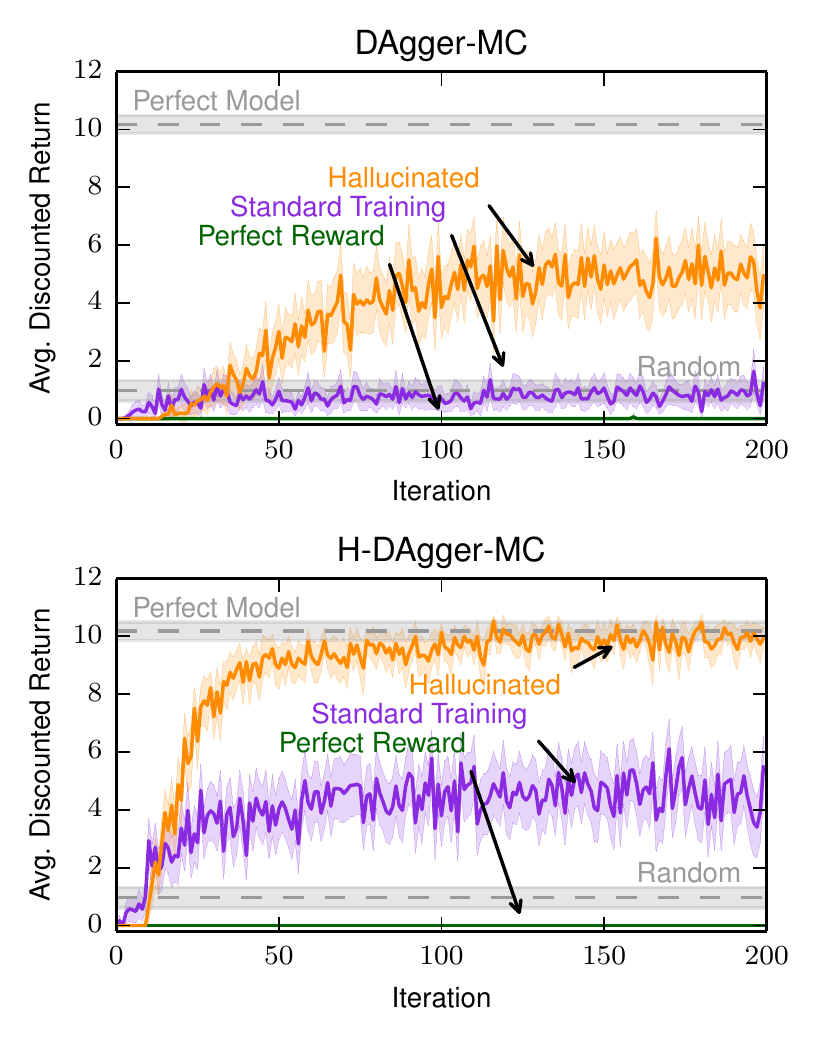}}\hfill
  \caption{Performance of DAgger-MC and H-DAgger-MC in three
    variations on the Shooter domain.}
  \label{fig:shooterResults}
\end{figure*}

\subsubsection{No Model Limitations}

In the first experiment we apply these algorithms to Shooter, as
described in Section \ref{sec:intro}. Here, the dynamics model uses a
$7 \times 7$ neighborhood, which is sufficient to make perfectly
accurate predictions for every pixel. Figure \ref{fig:shooterResults}a shows the
discounted return of the policies generated by DAgger-MC and
H-DAgger-MC, averaged over 50 independent trials. The shaded region
surrounding each curve represents a 95\% confidence interval. The gray
line marked ``Random'' shows the average discounted return of the
uniform random policy (with a 95\% confidence interval). The gray line
marked ``Perfect Model'' shows the average discounted return of the
one-ply MC planner using a perfect model.

Unsurprisingly, the performance DAgger-MC is comparable to that of
planning with the perfect model. As observed by
\citet{talvitie2017self}, with the perfect reward model H-DAgger-MC
performs slightly worse than DAgger-MC; the dynamics model in
H-DAgger-MC receives noisier data and is thus less
accurate. Interestingly, we can now see that the learned reward model
yields better performance than the perfect reward model, even without
hallucinated training! The perfect reward model relies on specific
screen configurations that are less likely to appear in flawed sample
rollouts, but the learned reward model generalizes to screens not seen
during training. Of course, it is coincidental that this
generalization is beneficial; under standard training the reward model
is only trained in environment states, giving no guidance in erroneous
model states. Hallucinated training specifically trains the reward
model to make reasonable predictions during model rollouts, so it
yields better performance, comparable with that of DAgger-MC. Thus we
see that learning the reward function in this way mitigates a
shortcoming of H-DAgger-MC, making it more effective in practice when
a perfectly accurate model can be learned.

\subsubsection{Failure of the Markov Assumption}

Next we consider a version of shooter presented by
\citet{talvitie2017self} in which the bullseye in each target moves
from side to side, making the environment second-order Markov. Because
the model is Markov, it cannot accurately predict the movement of the
bullseyes, though the representation is sufficient to accurately
predict every other pixel.

Figure \ref{fig:shooterResults}b shows the results. As
\citet{talvitie2017self} observed, DAgger-MC fails catastrophically in
this case. Though the model's limitation only prevents it from
accurately predicting the bullseyes, the resulting errors compound
during rollouts, quickly rendering them useless. As previously
observed, H-DAgger-MC performs much better, as it trains the model to
produce more stable rollouts. In both cases we see again that the
learned reward models outperform the perfect reward model, and
hallucinated reward training yields the best performance, even
allowing DAgger-MC to perform better than the random policy.

\subsubsection{Flawed Factored Structure}

We can see the importance of hallucinated reward training even more
clearly when we consider the original Shooter domain (with static
bullseyes), but limit the size of the neighborhood used to predict each
pixel, as described in Section \ref{sec:example}. Figure
\ref{fig:shooterResults}c shows the results. Once again DAgger-MC
fails. Again we see that the learned reward models yield better
performance than the perfect reward function, and that hallucinated
training guides the reward model to be useful for planning, despite
the flaws in the dynamics model.

In this case, we can see that H-DAgger-MC {\em also} fails when
combined with the perfect reward model, and performs poorly with the
reward model trained only on environment states. Hallucinated training
helps the dynamics model produce stable sample rollouts, but does not
correct the fundamental limitation: the dynamics model cannot
accurately predict the shape of the explosion when a target is hit. As
a result, a reward model that bases its predictions only the
explosions that occur in the environment will consistently fail to
predict reward when the agent hits a target in sample
rollouts. Hallucinated training, in contrast, specializes the reward
model to the flawed dynamics model, allowing for performance
comparable to planning with a perfect model.

\section{Conclusion}

This paper has introduced hallucinated reward error, which measures
the extent to which the rewards in a sample rollout from the model
match the rewards in a parallel rollout from the environment. Under
some conditions, this quantity is more tightly related to control
performance than the more traditional measure of model quality (reward
error in environment states plus error in state
transition). Empirically we have seen that when the dynamics model is
flawed, reward functions learned in the typical manner and even 
``perfect'' reward functions given {\em a priori} can lead to
catastrophic planning failure. When the reward function
is trained to minimize hallucinated reward error, it specifically
accounts for the model's flaws, significantly improving performance.

\section*{Acknowledgements}

This work was supported by NSF grant IIS-1552533. Thanks also to
Michael Bowling for his valuable input and to Joel Veness for his
freely available FAC-CTW and CTS implementations (\url{http:
//jveness.info/software/}).

\bibliography{hallucinatedReward_arXiv2018}
\bibliographystyle{icml2018}

\appendix

\section{Hallucinated DAgger-MC Details}

Hallucinated DAgger-MC, like earlier variations on DAgger, requires
the ability to reset to the initial state distribution $\mu$ and also
the ability to reset to an ``exploration distribution'' $\nu$. The
exploration distribution ideally ensures that the agent will encounter
states that would be visited by a good policy. The performance bound
for H-DAgger-MC depends in part on the quality of the selected $\nu$.

In addition to assuming a particular form for the planner (one-ply MC
with a blind rollout policy), H-DAgger-MC requires the dynamics model
to be ``unrolled''. Rather than learning a single $\hat{P}$,
H-DAgger-MC learns a set
$\{\hat{P}^1, \ldots, \hat{P}^{T-1}\} \subseteq \mathcal{P}$, where
model $\hat{P}^i$ is responsible for predicting the outcome of step
$i$ of a rollout, given the state sampled from $\hat{P}^{i-1}$. While
this impractical condition is important theoretically,
\citet{talvitie2017self} showed that in practice a single
$\mathcal{P}$ can be used for all steps; the experiments in Section
\ref{sec:experiments} make use of this practical alteration.

Algorithm \ref{alg:HDAggerMC} augments H-DAgger-MC to learn a reward
model as well as a dynamics model. In particular, H-DAgger-MC proceeds
in iterations, each iteration producing a new plan, which is turn used
to collect data to train a new model. In each iteration state-action
pairs are sampled using the current plan and the exploration
distribution (lines 7-13), and then the world and model are rolled out
in parallel to generate hallucinated training examples (lines
14-21). The resulting data is used to update the model. We simply add
a reward model learning process, and collect training examples along
with the state transition examples during the rollout. After both parts of
the model have been updated, a new plan is generated for the
subsequent iteration. Note that while the dynamics model is
``unrolled'', there is only a single reward model that is responsible
for predicting the reward at every step of the rollout. We assume that
the reward learning algorithm is performing a weighted regression
(where each training example is weighted by $\gamma^{t-1}$ for the
rollout step $t$ in which it occurred).

\begin{algorithm}[!t]
  \caption{Hallucinated DAgger-MC (+ reward learning)}\label{alg:HDAggerMC}
  \begin{algorithmic}[1]
    \REQUIRE \textsc{Learn-Dynamics}{}, \textsc{Learn-Reward}, exploration distr. $\nu$,
    \textsc{MC-Planner}(blind rollout policy $\rho$, depth
    $T$), \# iterations $N$, \# rollouts per iteration $K$. 
    \STATE Get initial datasets $\mathcal{D}^{1:T-1}_1$ and $\mathcal{E}_1$ (maybe using $\nu$)
    \STATE Initialize $\hat{P}^{1:T-1}_1 \gets$ \textsc{Learn-Dynamics}($\mathcal{D}^{1:T-1}_1$).
    \STATE Initialize $\hat{R}_1 \gets$ \textsc{Learn-Reward}($\mathcal{E}_1$).
    \STATE Initialize $\hat{\pi}_1 \gets$ \textsc{MC-Planner}($\hat{P}^{1:T-1}_1$, $\hat{R}_1$).
    \FOR{$n \gets 2 \ldots N$}
    \FOR{$k \gets 1 \ldots K$}
    \STATE With probability... \COMMENT{First sample from $\xi$}\label{line:samplestart}
    \STATE \hspace{1em} $\nicefrac{1}{2}$: Sample $(x, b) \sim D_{\mu}^{\hat{\pi}_n}$
    \STATE \hspace{1em} $\nicefrac{1}{4}$: Reset to $(x, b) \sim \nu$.
    \STATE \hspace{1em} $\nicefrac{(1-\gamma)}{4}$: Sample $x \sim \mu$, $b \sim
     \hat{\pi}_n(\cdot \mid x)$. 
     \STATE \hspace{1em} $\nicefrac{\gamma}{4}$: Reset to $(y, c)
     \sim \nu$
     \STATE \hspace{1em}\hspace{1em}\hspace{0.1in}
     Sample $x \sim P(\cdot \mid y, c)$, $b \sim \hat{\pi}_n(\cdot \mid x)$\label{line:sampleend}
    \STATE Let $s \gets x$, $z \gets x$, $a \gets b$.
    \FOR[Parallel rollouts...]{$t \leftarrow 1\ldots T-1$} \label{line:startproll}
    \STATE Sample $s' \sim P(\cdot \mid s, a)$.
    \STATE Add $\langle z, a, s' \rangle$ to
    $\mathcal{D}^t_n$. \\
    \hfill \COMMENT{(DAgger-MC adds $\langle s, a, s' \rangle$)}
    \STATE Add $\langle z, a, R_s^a, \gamma^{t-1} \rangle$ to
    $\mathcal{E}_n$.\\
    \hfill \COMMENT{(Standard approach adds $\langle s,
    a, R_s^a, \gamma^{t-1} \rangle$)} 
    \STATE Sample $z' \sim \hat{P}^t_{n-1}(\cdot \mid z, a)$.
    \STATE Let $s \gets s'$, $z \gets z'$, and sample $a \sim \rho$.\label{line:endproll}
    \ENDFOR
    \STATE Add $\langle z, a, R_s^a, \gamma^{T-1} \rangle$ to
    $\mathcal{E}_n$.\\
    \hfill \COMMENT{(Standard approach adds $\langle s,
    a, R_s^a, \gamma^{T-1} \rangle$)} 
    \ENDFOR
    \STATE $\hat{P}^{1:T-1}_n \gets$ \textsc{Learn-Dynamics}($\hat{P}^{1:T-1}_{n-1}$, $\mathcal{D}^{1:T-1}_n$) 
    \STATE $\hat{R}_n \gets$ \textsc{Learn-Reward}($\hat{R}_{n-1}$, $\mathcal{E}_n$) 
    \STATE $\hat{\pi}_n \gets$ \textsc{MC-Planner}($\hat{P}^{1:T-1}_{n}$, $\hat{R}_n$).
    \ENDFOR
    \STATE \textbf{return} the sequence $\hat{\pi}_{1:N}$
  \end{algorithmic}
\end{algorithm}

\subsection{Analysis of H-DAgger-MC}

We now derive theoretical guarantees for this new version of
H-DAgger-MC. The analysis is similar to that of existing DAgger
variants \cite{ross2012agnostic, talvitie2015agnostic,
  talvitie2017self}, but the proof is included for completeness. Let
$H^t_n$ be the distribution from which H-DAgger-MC samples a training
example at depth $t$ (lines 7-13 to pick an initial state-action pair,
lines 14-21 to roll out). Define the average error of the dynamics
model at depth $t$ to be
\begin{align*}
\bar{\epsilon}^{t}_{prd} = \frac{1}{N}\sum_{n = 1}^N \expects_{(s, z,
  a) \sim H^t_n} [1 -\hat{P}_{n}^{t}(\sigma_{s}^{a} \mid z, a)].
\end{align*}
Let $\epsilon_{\hat{R}_n}(s, z, a) = |R(s, a) - \hat{R}_n(z,
a)|$ and let
\begin{align*}
\bar{\epsilon}_{hrwd} = \frac{1}{N}\sum_{n = 1}^N \sum_{t=1}^T
\gamma^{t-1} \expects_{(s, z, a) \sim H^t_n} [\epsilon_{\hat{R}_n}(s,
  z, a)|]
\end{align*}
be the average reward model error. Finally, let $D^t_n$ be the
distribution from which H-DAgger-MC samples $s$ and $a$ during the
rollout in lines 14-21. The error of the reward model with respect to
these environment states is
\begin{align*}
\bar{\epsilon}_{erwd} = \frac{1}{N}\sum_{n=1}^N \sum_{t=1}^{T}
\gamma^{t-1} \expects_{(s, a) \sim D^t_n} [|R(s, a) -
\hat{R}(s, a)|].
\end{align*}

For a policy $\pi$, let
$c_{\nu}^{\pi} = \sup_{s,a} \frac{D_{\mu, \pi}(s,a)}{\nu(s,a)}$
represent the mismatch between the discounted state-action
distribution under $\pi$ and the exploration distribution $\nu$. Now,
consider the sequence of policies $\hat{\pi}_{1:N}$ generated by
H-DAgger-MC. Let $\bar{\pi}$ be the uniform mixture over all policies
in the sequence. Let
$\bar{\epsilon}_{mc} = \frac{1}{N} \frac{4}{1 - \gamma} \sum_{n=1}^N
\|\bar{Q}_n - \hat{Q}^{\rho}_{T,n}\|_{\infty} + \frac{2}{1 - \gamma}
\|BV^{\rho}_T - V^{\rho}_T\|_{\infty} $
be the error induced by the choice of planning algorithm, averaged
over all iterations.
\begin{lemma} \label{lem:prederror}
  In H-DAgger-MC, the policies $\hat{\pi}_{1:N}$ are such that for any
  policy $\pi$,
\begin{align*}
 \expect_{s
  \sim \mu}&\big[V^{\pi}(s) - V^{\bar{\pi}}(s)\big] \le
  \frac{4}{1-\gamma}c^\pi_\nu \bar{\epsilon}_{hrwd} + \bar{\epsilon}_{mc}\\
&\le \frac{4}{1-\gamma} c^\pi_\nu \Big(\bar{\epsilon}_{erwd} + 2M
  \sum_{t=1}^{T-1}\gamma^{t-1} \bar{\epsilon}^t_{prd}\Big) + \bar{\epsilon}_{mc}.
\end{align*}
\end{lemma}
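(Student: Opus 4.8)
The plan is to run a per-iteration argument and then average, in the style of the existing DAgger-MC analyses. It is worth noting up front that this lemma only converts the \emph{realized} model errors into a performance guarantee; it does not yet invoke the no-regret property (that is reserved for a later step relating $\bar{\epsilon}_{hrwd}$ to the best-in-class error). So the entire burden here is to show that, iteration by iteration, the suboptimality of the planned policy $\hat{\pi}_n$ is controlled by the hallucinated reward error of $(\hat{P}_n, \hat{R}_n)$ on the distribution the algorithm actually samples at iteration $n$.

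First I would fix an iteration $n$ and apply Lemma \ref{lem:mcvaluebound} to the policy $\hat{\pi}_n$, which is greedy with respect to the depth-$T$ one-ply MC value function computed from the model $(\hat{P}_n, \hat{R}_n)$ with blind rollout policy $\rho$. This gives $\expects_{s \sim \mu}[V^\pi(s) - V^{\hat{\pi}_n}(s)] \le \frac{4}{1-\gamma}\epsilon_{val}^{\xi_n, \rho, T} + \epsilon_{mc,n}$, where $\xi_n$ is the mixture distribution specified in Lemma \ref{lem:mcvaluebound} with $\hat{\pi}$ instantiated as $\hat{\pi}_n$. I would then apply Theorem \ref{thm:hrwd} to rewrite $\epsilon_{val}^{\xi_n, \rho, T} \le \sum_t \gamma^{t-1}\expects_{(s,z,a) \sim H^t_{\xi_n, \rho}}[|R_s^a - \hat{R}_{n,z}^a|]$, the hallucinated reward error of the iteration-$n$ model seeded by $\xi_n$.

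The hard part will be the change of measure from $\xi_n$ to the distribution $H^t_n$ the algorithm actually samples. The obstacle is that $\xi_n$ contains the two terms proportional to $D_{\mu, \pi}$ (the parts of Lemma \ref{lem:mcvaluebound}'s mixture involving the comparison policy $\pi$), which depend on the unknown $\pi$ and cannot be sampled; the algorithm instead substitutes the exploration distribution $\nu$ for them at the reset steps. The key observation is that $D_{\mu, \pi}(s,a) \le c_\nu^\pi\, \nu(s,a)$ pointwise by definition of $c_\nu^\pi$, and since $c_\nu^\pi \ge 1$ the components the algorithm samples exactly (those built from $D_{\mu, \hat{\pi}_n}$ and $\mu\hat{\pi}_n$) are also dominated; hence $\xi_n \le c_\nu^\pi\, \xi_n^{\mathrm{alg}}$, where $\xi_n^{\mathrm{alg}}$ is the algorithm's seed distribution. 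Because the map $\xi \mapsto H^t_{\xi, \rho}$ is linear and monotone in its seed (it merely pushes the seed forward through the fixed $P$, $\hat{P}_n$, and $\rho$), this domination lifts to $H^t_{\xi_n, \rho} \le c_\nu^\pi H^t_n$, so $\epsilon_{val}^{\xi_n, \rho, T} \le c_\nu^\pi \sum_t \gamma^{t-1}\expects_{H^t_n}[\epsilon_{\hat{R}_n}]$. I expect the main subtlety to be bookkeeping: ensuring that the planning model, the rollout model used to generate $H^t_n$, and the model whose reward and prediction error are measured are all consistently the iteration-$n$ model, so that the monotone pushforward is valid and the resulting quantity is exactly the $n$-th summand of $\bar{\epsilon}_{hrwd}$.

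Finally, I would average the per-iteration bounds over $n = 1, \dots, N$. Because $\bar{\pi}$ selects one $\hat{\pi}_n$ uniformly at random and then commits to it for the episode, linearity gives $\expects_{s \sim \mu}[V^{\bar{\pi}}(s)] = \frac{1}{N}\sum_n \expects_{s \sim \mu}[V^{\hat{\pi}_n}(s)]$, so averaging yields $\expects_{s \sim \mu}[V^\pi - V^{\bar{\pi}}] \le \frac{4}{1-\gamma}c_\nu^\pi \bar{\epsilon}_{hrwd} + \bar{\epsilon}_{mc}$, the first inequality. For the second inequality I would apply Theorem \ref{thm:tighter} to each seed distribution $H^t_n$, splitting the hallucinated reward error into the environment reward error over $D^t_n$ plus the dynamics term $2M\sum_t \gamma^t\, \expects_{H^t_n}[1 - \hat{P}_z^a(\sigma_s^a)]$; averaging over $n$ and collecting terms recovers $\bar{\epsilon}_{erwd}$ and $\bar{\epsilon}^t_{prd}$ directly from their definitions, giving the stated bound. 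This last step is essentially mechanical given Theorem \ref{thm:tighter}.
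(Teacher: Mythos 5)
Your proposal is correct and follows essentially the same route as the paper's proof: per-iteration application of Lemma \ref{lem:mcvaluebound}, then Theorem \ref{thm:hrwd}, then the change of measure via $c^\pi_\nu$ (your pointwise domination $\xi_n \le c^\pi_\nu\,\xi_n^{\mathrm{alg}}$, using $c^\pi_\nu \ge 1$, lifted through the linear pushforward $\xi \mapsto H^t_{\xi,\rho}$ is exactly the paper's term-by-term expectation bound, just packaged more compactly), identification with the algorithm's sampling distribution $H^t_n$, averaging over iterations, and a final application of Theorem \ref{thm:tighter} for the second inequality. One minor point in your favor: the paper's proof mis-cites Theorem \ref{thm:tightness} at that last step when Theorem \ref{thm:tighter} is what is actually used (together with the harmless loosening $\gamma^t \le \gamma^{t-1}$), and you invoke the correct result.
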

\begin{proof}
Recall that 
\begin{align*}
&\expect_{s \sim \mu}\big[V^{\pi}(s) - V^{\bar{\pi}}(s)\big] = \frac{1}{N}\sum_{n = 1}^N \expect_{s \sim \mu} \big[V^{\pi}(s) -
V^{\hat{\pi}_n}(s)\big].&
\end{align*}
and by Lemma \ref{lem:mcvaluebound} for any $n \ge 1$, 
\begin{align*}
&\expect_{s \sim \mu} \big[V^{\pi}(s) -
V^{\hat{\pi}_n}(s)\big] \le&\\
&\hspace{.4in}\frac{4}{1 - \gamma} \expect_{(s, a)
  \sim \xi^{\pi, \hat{\pi}_n}_{\mu}}[|\hat{Q}_{T,n}^{\rho}(s, a) - Q_T^{\rho}(s, a)|] +\bar{\epsilon}_{mc},&
\end{align*}
where 
\begin{align*}
\xi^{\pi, \hat{\pi}_n}_{\mu}(s, a) = \frac{1}{2} &D_{\mu, \hat{\pi}_n}(s, a) + \frac{1}{4}
  D_{\mu, \pi}(s, a)\\ 
&+ \frac{1}{4}\Big((1 - \gamma) \mu(s)
    \hat{\pi}_n(a \mid s)\\  
&\hspace{.1in} + \gamma \sum_{z, b} D_{\mu, \pi}(z, b)
    P_{z}^{b}(s) \hat{\pi}_n(a \mid s) \Big).
\end{align*} 

Then, combining the above with Theorem \ref{thm:hrwd},
\begin{align*}
&\frac{1}{N} \sum_{n=1}^N \frac{4}{1 - \gamma} \expect_{(s, a)
  \sim \xi^{\pi, \hat{\pi}_n}_{\mu}}[|\hat{Q}_{T,n}^{\rho}(s, a) - Q_T^{\rho}(s, a)|] +\bar{\epsilon}_{mc}&\\
& \le \frac{1}{N} \sum_{n=1}^N \frac{4}{1-\gamma}
  \sum_{t=1}^T \gamma^{t-1} \expect_{\scriptsize\begin{aligned}&(s, z, a)\\
  &\sim
  H^{t,n}_{\xi_\mu^{\pi, \hat{\pi}_n}, \rho}\end{aligned}} [\epsilon_{\hat{R}_n}(s,
  z, a)] + \bar{\epsilon}_{mc}&
\end{align*}

Now note that for any $t$ and any $n$,
\begin{align*}
&\expect_{(s, z, a) \sim H^{t, n}_{\xi^{\pi,
      \hat{\pi}_n}_{\mu}, \rho}}\big[\epsilon_{\hat{R}_n}(s, z,
  a)\big]&\\
&= \frac{1}{2} \sum_{s', a'} D_{\mu, \hat{\pi}_n}(s', a') \expect_{(s, z, a) \sim H^{t, n}_{s', a', \rho}}\big[\epsilon_{\hat{R}_n}(s, z, a)\big]&\\
&\hspace{.15in}+\frac{1}{4} \sum_{s', a'} D_{\mu, \pi}(s', a') \expect_{(s, z, a) \sim H^{t, n}_{s', a', \rho}}\big[\epsilon_{\hat{R}_n}(s, z, a)\big]&\\
&\hspace{.15in}+\frac{\gamma}{4} \sum_{s', a'} \sum_{s'', a''}D_{\mu, \pi}(s'',
a'')P_{s''}^{a''}(s')\hat{\pi}_n(a' \mid s')&\\
&\hspace{1.3in} \expect_{(s, z, a) \sim H^{t, n}_{s', a', \rho}}\big[\epsilon_{\hat{R}_n}(s, z, a)\big]&\\
&\hspace{.15in}+\frac{1- \gamma}{4} \sum_{s', a'}
\mu(s')\hat{\pi}_n(a' \mid s')&\\
&\hspace{1.3in} \expect_{(s, z, a) \sim H^{t, n}_{s', a',
  \rho}}\big[\epsilon_{\hat{R}_n}(s, z, a)\big]&\\
&\le \frac{1}{2} \sum_{s', a'} D_{\mu, \hat{\pi}_n}(s', a') \expect_{(s, z, a) \sim H^{t, n}_{s', a', \rho}}\big[\epsilon_{\hat{R}_n}(s, z, a)\big]&\\
&\hspace{.15in}+\frac{1}{4} c^\pi_\nu \sum_{s', a'} \nu(s', a') \expect_{(s, z, a) \sim H^{t, n}_{s', a', \rho}}\big[\epsilon_{\hat{R}_n}(s, z, a)\big]&\\
&\hspace{.15in}+\frac{\gamma}{4} c^\pi_\nu \sum_{s', a'} \sum_{s'', a''}\nu(s'',
a'')P_{s''}^{a''}(s')\hat{\pi}_n(a' \mid s')&\\
&\hspace{1.3in} \expect_{(s, z, a) \sim H^{t, n}_{s', a', \rho}}\big[\epsilon_{\hat{R}_n}(s, z, a)\big]&\\
&\hspace{.15in}+\frac{1- \gamma}{4} \sum_{s', a'}
\mu(s')\hat{\pi}_n(a' \mid s')&\\
&\hspace{1.3in} \expect_{(s, z, a) \sim H^{t, n}_{s', a',
  \rho}}\big[\epsilon_{\hat{R}_n}(s, z, a)\big]&
\end{align*}
\begin{align*}
&\le c^\pi_\nu \bigg(\frac{1}{2} \sum_{s', a'} D_{\mu,
  \hat{\pi}_n}(s', a')&\\
&\hspace{1.3in} \expect_{(s, z, a) \sim H^{t, n}_{s', a', \rho}}\big[\epsilon_{\hat{R}_n}(s, z, a)\big]&\\
&\hspace{.15in}+\frac{1}{4} \sum_{s', a'} \nu(s', a') \expect_{(s, z, a) \sim H^{t, n}_{s', a', \rho}}\big[\epsilon_{\hat{R}_n}(s, z, a)\big]&\\
&\hspace{.15in}+\frac{\gamma}{4} \sum_{s', a'} \sum_{s'', a''}\nu(s'',
a'')P_{s''}^{a''}(s')\hat{\pi}_n(a' \mid s')&\\
&\hspace{1.3in} \expect_{(s, z, a) \sim H^{t, n}_{s', a', \rho}}\big[\epsilon_{\hat{R}_n}(s, z, a)\big]&\\
&\hspace{.15in}+\frac{1- \gamma}{4} \sum_{s', a'} \mu(s')\hat{\pi}_n(a'
\mid s')&\\
&\hspace{1.3in} \expect_{(s, z, a) \sim H^{t, n}_{s', a',
  \rho}}\big[\epsilon_{\hat{R}_n}(s, z, a)\big]&\\
&=c^\pi_\nu \expect_{(s, z, a) \sim H^{t, n}_{\xi_n, \rho}}\big[\epsilon_{\hat{R}_n}(s, z, a)\big].&
\end{align*}

When $t = 1$,
\begin{align*}
\expect_{(s, z, a) \sim H^{t, n}_{\xi_n,
    \rho}}\big[\epsilon_{\hat{R}_n}(s, z, a)\big] = \expect_{(s, a) \sim \xi_n(s,
  a)}\big[\epsilon_{\hat{R}_n}(s, z, a)\big]. 
\end{align*}
When $t > 1$,
\begin{align*}
&\expect_{(s, z, a) \sim H^{t, n}_{\xi_n, \rho}}\big[\epsilon_{\hat{R}_n}(s, z, a)\big]&\\
&= \sum_{s_t, z_t, a_t}\expect_{(s_1, a_1) \sim
  \xi_n}\bigg[\sum_{a_{1:t-1}}\rho(a_{2:t} \mid a_1)&\\
&\hspace{.5in}P_{s_1}^{a_{0:t-1}}(s_t \mid
s_1, a_{0:t-1})\hat{P}_{n}^{1:t-1}(z_t \mid
s_1,a_{0:t-1})&\\
&\hspace{2.4in}\epsilon_{\hat{R}_n}(s_t, z_t, a_t)\bigg]&\\
&=\expect_{(s, z, a) \sim H^t_n}\big[\epsilon_{\hat{R}_n}(s, z, a)\big].&
\end{align*}

Thus, putting it all together, we have shown that
\begin{align*}
  &\expect_{s \sim \mu}\big[V^{\pi}(s) - V^{\bar{\pi}}(s)\big] &\\
&\hspace{.1in}\le
  \frac{4}{1 - \gamma} c^\pi_{\nu}\frac{1}{N} \sum_{n=1}^N
  \sum_{t = 1}^{T}\gamma^{t-1}\expect_{(s, z, a) \sim H^t_n}\big[\epsilon_{\hat{R}_n}(s, z,
  a)\big]&\\
&\hspace{2.8in} + \bar{\epsilon}_{mc}&\\
&\hspace{.1in}=
  \frac{4}{1 - \gamma} c^\pi_{\nu}\bar{\epsilon}_{hrwd} + \bar{\epsilon}_{mc}.&
\end{align*}
Thus we have proven the first inequality. Furthermore, by Theorem \ref{thm:tightness},
\begin{align*}
&\bar{\epsilon}_{hrwd} = \frac{1}{N} \sum_{n=1}^N
  \sum_{t = 1}^{T}\expect_{(s, z, a) \sim H^t_n}\big[\hat{R}_n(s, z,
  a)\big]&\\
&\le \frac{1}{N} \sum_{n=1}^N
  \bigg(\sum_{t = 1}^{T}\gamma^{t-1}\expect_{(s, a) \sim D^t_n}\big[|R(s, a) -
  \hat{R}_n(s, a)|\big]&\\
&\hspace{.45in} + 2M\sum_{t = 1}^{T-1}\gamma^{t-1}\expect_{(s, z, a) \sim H^t_n}\big[1 -
  \hat{P}^t_n(\sigma_s^a \mid z, a)|\big]\bigg)&\\
&= \frac{1}{N} \sum_{n=1}^N\sum_{t = 1}^{T}\gamma^{t-1}\expect_{(s, a) \sim D^t_n}\big[|R(s, a) -
  \hat{R}_n(s, a)|\big]&\\
&\hspace{.2in} + 2M\sum_{t = 1}^{T-1}\gamma^{t-1}\frac{1}{N} \sum_{n=1}^N\expect_{(s, z, a) \sim H^t_n}\big[1 -
  \hat{P}^t_n(\sigma_s^a \mid z, a)|\big]&\\
&\le \bar{\epsilon}_{erwd} + 2M
  \sum_{t=1}^{T-1}\gamma^{t-1} \bar{\epsilon}^t_{prd}.
\end{align*}
This gives the second inequality.
\end{proof}

Note that this result holds for {\em any} comparison policy
$\pi$. Thus, if $\bar{\epsilon}_{mc}$ is small and the learned models
have low error, then if $\nu$ is similar to the state-action
distribution under {\em some} good policy, $\bar{\pi}$ will compare
favorably to it. That said, Lemma \ref{lem:prederror} shares the
limitations of the comparable results for the other DAgger
algorithms. It focuses on the L1 loss, which is not always a practical
learning objective. It also assumes that the expected loss at each
iteration can be computed exactly (i.e. that there are infinitely many
samples per iteration). It also applies to the average policy
$\bar{\pi}$, rather than $\hat{\pi}_N$. \citet{ross2012agnostic}
discuss extensions that address more practical loss functions, finite
sample bounds, and results for $\hat{\pi}_N$.

Lemma \ref{lem:prederror} effectively says that {\em if} the models
have low training error, the resulting policy will be good. It does
not promise that the models will have low training error. Following
\citet{ross2012agnostic} note that $\bar{\epsilon}^t_{prd}$ and
$\bar{\epsilon}_{hrwd}$ can each be interpreted as the average loss of
an online learner on the problem defined by the aggregated
datasets. Then for each horizon depth $t$ let
$\bar{\epsilon}^{t}_{pmdl}$ be the error of the best dynamics model in
$\mathcal{P}$ under the training distribution at that depth, in
retrospect. Specifically,
\begin{align*}
  \bar{\epsilon}^{t}_{\mathcal{P}} = \inf_{P' \in
  \mathcal{P}}\frac{1}{N}\sum_{n = 1}^N \expect_{(s, z, a) \sim
  H^t_n} [1 - P'(\sigma_{s}^{a} \mid z, a)].
\end{align*}
Similarly, let
\begin{align*}
\bar{\epsilon}_{\mathcal{R}} = \inf_{R' \in \mathcal{R}}\frac{1}{N}\sum_{n =
  1}^N \sum_{t=1}^T \gamma^{t-1} \expect_{(s, z, a) \sim
  \tilde{H}^t_n} [\epsilon_{R'}(s, z, a)]
\end{align*}
 be the error of the best reward
model in $\mathcal{R}$ in retrospect.

The average regret for the dynamics model at depth $t$ is
$\bar{\epsilon}^t_{prgt} = \bar{\epsilon}_{prd}^t -
\bar{\epsilon}_{\mathcal{P}}^t$.
For the reward model it is
$\bar{\epsilon}_{rrgt} = \bar{\epsilon}_{hrwd} -
\bar{\epsilon}_{\mathcal{R}}$.
For a no-regret online learning algorithm, average regret approaches 0
as $N \rightarrow \infty$. This gives the following bound on
H-DAgger-MC's performance in terms of model regret.
\begin{theorem}\label{thm:hdaggermc}
  In H-DAgger-MC, the policies $\hat{\pi}_{1:N}$ are such that for any
  policy $\pi$,
\begin{align*}
& \expect_{s
  \sim \mu}\big[V^{\pi}(s) - V^{\bar{\pi}}(s)\big]\\
& \le
  \frac{4}{1-\gamma}c^\pi_\nu (\bar{\epsilon}_{\mathcal{R}} + \bar{\epsilon}_{rrgt}) + \bar{\epsilon}_{mc}\\
& \le \frac{4}{1-\gamma} c^\pi_\nu \Big(\bar{\epsilon}_{erwd} + 2M
  \sum_{t=1}^{T-1} \gamma^{t-1} (\bar{\epsilon}^t_{\mathcal{P}} + \bar{\epsilon}^t_{prgt})\Big) + \bar{\epsilon}_{mc}
\end{align*}
  and if the learning algorithms are no-regret then as $N \rightarrow \infty$,
  $\bar{\epsilon}_{rrgt} \rightarrow 0$ and $\bar{\epsilon}_{prgt}^t \rightarrow 0$ for
  each $1 \le t \le T-1$.
\end{theorem}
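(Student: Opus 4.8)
The plan is to derive both inequalities by purely algebraic substitution into Lemma \ref{lem:prederror}, and then to establish the two limiting claims by appealing to the no-regret property of the dynamics and reward learners. First I would recall the two bounds already supplied by Lemma \ref{lem:prederror}: $\expect_{s \sim \mu}[V^\pi(s) - V^{\bar\pi}(s)] \le \frac{4}{1-\gamma} c^\pi_\nu \bar{\epsilon}_{hrwd} + \bar{\epsilon}_{mc}$, and its refinement in terms of $\bar{\epsilon}_{erwd} + 2M\sum_{t=1}^{T-1}\gamma^{t-1}\bar{\epsilon}^t_{prd}$. The definitions of the regret terms rearrange immediately to $\bar{\epsilon}_{hrwd} = \bar{\epsilon}_{\mathcal{R}} + \bar{\epsilon}_{rrgt}$ and, for each depth $t$, $\bar{\epsilon}^t_{prd} = \bar{\epsilon}^t_{\mathcal{P}} + \bar{\epsilon}^t_{prgt}$. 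Substituting the first identity into the first bound of Lemma \ref{lem:prederror} yields the first inequality of the theorem; substituting the per-depth identities into the second bound and collecting the $2M\gamma^{t-1}$ coefficients yields the second. This portion is entirely routine.

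The substantive step is the no-regret argument, which I would model on the reduction used for the earlier DAgger variants \citet{ross2012agnostic}. I would view each of the $T-1$ dynamics learners (one per rollout depth $t$) and the single reward learner as online learners facing an adversary who, at round $n$, reveals the dataset generated under the current policy $\hat\pi_n$. For the depth-$t$ dynamics learner, its loss at round $n$ is $\expect_{(s,z,a)\sim H^t_n}[1 - \hat{P}^t_n(\sigma^a_s \mid z, a)]$, so its average per-round loss over the $N$ rounds is exactly $\bar{\epsilon}^t_{prd}$, while the average loss of the single best model in $\mathcal{P}$ in hindsight is exactly $\bar{\epsilon}^t_{\mathcal{P}}$; hence $\bar{\epsilon}^t_{prgt}$ is precisely that learner's average external regret. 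The identical bookkeeping for the $\gamma^{t-1}$-weighted reward loss identifies $\bar{\epsilon}_{hrwd}$ with the reward learner's average loss, $\bar{\epsilon}_{\mathcal{R}}$ with its best-in-hindsight loss, and therefore $\bar{\epsilon}_{rrgt}$ with its average regret. Since both Learn-Dynamics and Learn-Reward are assumed no-regret, each of these average regrets tends to $0$ as $N \to \infty$, giving the final claim.

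The main obstacle, and the only place care is genuinely required, is confirming that the losses the online learners actually observe coincide with the expectations $\bar{\epsilon}^t_{prd}$ and $\bar{\epsilon}_{hrwd}$ that are defined against the true sampling distributions $H^t_n$. This relies on the idealized assumption noted after Lemma \ref{lem:prederror}, namely that each iteration supplies infinitely many samples, so that the empirical loss on $\mathcal{D}^t_n$ (respectively $\mathcal{E}_n$) equals its expectation under $H^t_n$; with finitely many samples one would instead invoke a finite-sample no-regret guarantee as in \citet{ross2012agnostic}. I would also verify that the $\gamma^{t-1}$ weighting built into the reward training examples in Algorithm \ref{alg:HDAggerMC} matches the weighting appearing in the definitions of $\bar{\epsilon}_{hrwd}$ and $\bar{\epsilon}_{\mathcal{R}}$, so that the reward learner is genuinely no-regret for the weighted objective that drives the bound. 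Once these identifications are in place, both inequalities and both limits follow.
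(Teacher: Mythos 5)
Your proposal is correct and takes essentially the same route as the paper, which states Theorem \ref{thm:hdaggermc} without a separate proof precisely because, as you observe, it is immediate from Lemma \ref{lem:prederror} via the substitutions $\bar{\epsilon}_{hrwd} = \bar{\epsilon}_{\mathcal{R}} + \bar{\epsilon}_{rrgt}$ and $\bar{\epsilon}^t_{prd} = \bar{\epsilon}^t_{\mathcal{P}} + \bar{\epsilon}^t_{prgt}$, with the limits coming from the same \citet{ross2012agnostic}-style interpretation of $\bar{\epsilon}^t_{prd}$ and $\bar{\epsilon}_{hrwd}$ as average losses of online learners on the aggregated datasets. Your two flagged caveats (the $\gamma^{t-1}$ weighting of the reward examples and the infinite-samples-per-iteration idealization) are exactly the ones the paper itself notes in the surrounding discussion.
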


Theorem \ref{thm:hdaggermc} says that if $\mathcal{R}$ contains a
low-error reward model relative to the learned dynamics models then,
as discussed above, if $\bar{\epsilon}_{mc}$ is small and $\nu$ visits
important states, the resulting policy will yield good performance.
If $\mathcal{P}$ and $\mathcal{R}$ contain perfect models, $\bar{\pi}$
will be comparable to the plan generated by the perfect model.

As noted by \citet{talvitie2017self}, this result does {\em not}
promise that H-DAgger-MC will eventually achieve the performance of
the best available set of dynamics models. The model at each rollout
depth is trained to minimize prediction error given the input
distribution provided by the shallower models without regard for the
effect on deeper models. It is possible that better overall error
could be achieved by {\em increasing} the prediction error at one
depth in exchange for a favorable state distribution for deeper
models. Similarly, as discussed in Section \ref{sec:implications},
H-DAgger-MC will not necessarily achieve the performance of the best
available combination of dynamics and reward models. The dynamics
model is trained without regard for the impact on the reward model. It
could be that a dynamics model with higher prediction error would
allow for lower hallucinated reward error. H-DAgger-MC does not take
this possibility into account.
\end{document}